\def\sgn{{\textnormal{sgn}}}
\def\1{\bm{1}}
\def\ra{{\textnormal{a}}}
\def\rb{{\textnormal{b}}}
\def\re{{\textnormal{e}}}
\def\rh{{\textnormal{h}}}
\def\ru{{\textnormal{u}}}
\def\rw{{\textnormal{w}}}
\def\rx{{\textnormal{x}}}
\def\ry{{\textnormal{y}}}
\def\rz{{\textnormal{z}}}
\def\rvs{{\mathbf{s}}}
\def\rvx{{\mathbf{x}}}
\def\rvz{{\mathbf{z}}}
\def\vdelta{{\bm{\delta}}}
\def\vs{{\bm{s}}}
\def\vv{{\bm{v}}}
\def\vx{{\bm{x}}}
\def\vz{{\bm{z}}}
\def\vDelta{{\bm{\Delta}}}
\def\mF{{\bm{F}}}
\def\mI{{\bm{I}}}
\def\mM{{\bm{M}}}
\DeclareMathAlphabet{\mathsfit}{\encodingdefault}{\sfdefault}{m}{sl}
\SetMathAlphabet{\mathsfit}{bold}{\encodingdefault}{\sfdefault}{bx}{n}
\DeclareMathOperator*{\argmax}{arg\,max}
\newacronym{iot}{IoT}{internet of things}
\newacronym{mec}{MEC}{mobile edge computing}
\newacronym{ee}{EE}{early exiting}
\newacronym{soc}{SoC}{state of charge}
\newacronym{api}{API}{application programming interface}
\newacronym{dfs}{DFS}{dynamic frequency scaling}
\newacronym{ann}{ANN}{artificial neural network}
\newacronym{nn}{NN}{neural network}
\newacronym{dnn}{DNN}{deep neural network}
\newacronym{pu}{PU}{processing units}
\newacronym{cpu}{CPU}{central processing unit}
\newacronym{gpu}{GPU}{graphical processing unit}
\newacronym{ai}{AI}{artificial intelligence}
\newacronym{pi}{PI}{policy iteration}
\newacronym{vi}{VI}{value iteration}
\newacronym{rl}{RL}{reinforcement learning}
\newacronym{fl}{FL}{federated learning}
\newacronym{dqn}{DQN}{Deep Q-Network}
\newacronym{med}{MED}{mobile edge device}
\newacronym{sc}{SC}{split computing}
\newacronym{fc}{FC}{fully connected}
\newacronym{conv}{Conv}{convolutional}
\newacronym{fifo}{FIFO}{First-In First-Out}
\newacronym{eh}{EH}{energy-harvesting}
\newacronym{iem}{IEM}{intermittent execution model}
\newacronym{mcu}{MCU}{microcontroller unit}
\newacronym{esd}{ESD}{energy storage device}
\newacronym{es}{ES}{energy storage}
\newacronym{pmf}{pmf}{probability mass function}
\newacronym{pdf}{pdf}{probability density function}
\newacronym{nas}{NAS}{neural architecture search}
\newacronym{flop}{FLOP}{floating point operation}
\newacronym{mbconv}{MBConv}{mobile inverted bottleneck convolution}
\newacronym{ml}{ML}{machine learning}
\newacronym{dl}{DL}{deep learning}
\newacronym{vit}{ViT}{vision transformer}
\newacronym{ehd}{EHD}{energy harvesting device}
\newacronym{rcd}{RCD}{resource-constrained device}
\newacronym{mmux}{MMUX}{model multiplexing}
\newacronym{mms}{MMS}{multi-model selection}
\newacronym{mdp}{MDP}{Markov decision process}
\newacronym{iaw}{IAw}{instance-aware}
\newacronym{iag}{IAg}{instance-agnostic}
\newacronym{inc}{Inc}{incremental}
\newacronym{os}{OS}{one-shot}
\pgfplotsset{
    compat=1.18,
    /pgfplots/legend image code/.code={%
        \draw[mark repeat=2,mark phase=2,#1] 
            plot coordinates {
                (0cm,0cm) 
                (0.3cm,0cm)
            };
    },
}
\newtheorem{lemma}{Lemma}
\newtheorem{theorem}{Theorem}
\newtheorem{example}{Example}
\newsavebox{\measure@tikzpicture}
  \def\tikz@width{#1}%
  \def\tikzscale{1}\begin{lrbox}{\measure@tikzpicture}%
  \edef\tikzscale{\pgfmathresult}%
\newcommand\bluesout{\bgroup\markoverwith{\textcolor{blue}{\rule[0.5ex]{2pt}{0.4pt}}}\ULon}
\newcommand*\titleheader[1]{\gdef\@titleheader{#1}}
  \let\st@red@title\@title
  \def\@title{%
    \bgroup\footnotesize\@titleheader\par\egroup
    \vskip.8em\st@red@title}
\title{Energy-Aware Dynamic Neural Inference}
\author{Marcello Bullo,~\IEEEmembership{Student Member,~IEEE}, Seifallah Jardak, Pietro Carnelli, Deniz G\"und\"uz~\IEEEmembership{Fellow,~IEEE}
\thanks{This work received funding from the EU Horizon 2020 Marie Skłodowska Curie ITN Greenedge (GA. No. 953775), and UKRI through project SONATA (EPSRC-EP/W035960/1). For the purpose of open access, the authors have applied a Creative Commons Attribution (CC BY) license to any Author Accepted Manuscript version arising from this submission.}
\thanks{Marcello Bullo is with the Department of Electrical and Electronic Engineering, Imperial College London, and with Toshiba Bristol Research and Innovation Laboratory, Bristol (email: m.bullo21@imperial.ac.uk, marcello.bullo@toshiba-bril.com)}
\thanks{Seifallah Jardak and Pietro Carnelli are with Toshiba Bristol Research and Innovation Laboratory, Bristol (email: \{name\}.\{surname\}@toshiba-bril.com)}
\thanks{Deniz G\"und\"uz is with the Department of Electrical and Electronic Engineering, Imperial College London (email: d.gunduz@imperial.ac.uk)}
}
\begin{document}

\maketitle

\begin{abstract}

    
    The growing demand for intelligent applications beyond the network edge, coupled with the need for sustainable operation, are driving the seamless integration of \glsdesc*{dl} algorithms into energy-limited, and even energy-harvesting end-devices. However, the stochastic nature of ambient energy sources often results in insufficient harvesting rates, failing to meet the energy requirements for inference and causing significant performance degradation in energy-agnostic systems. To address this problem, we consider an on-device adaptive inference system equipped with an energy-harvester and finite-capacity energy storage. We then allow the device to reduce the run-time execution cost on-demand, by either switching between differently-sized neural networks, referred to as \gls*{mms}, or by enabling earlier predictions at intermediate layers, called \gls*{ee}. 
    The model to be employed, or the exit point is then dynamically chosen based on the energy storage and harvesting process states.
    We also study the efficacy of integrating the prediction confidence into the decision-making process. We derive a principled policy with theoretical guarantees for confidence-aware and -agnostic controllers. Moreover, in multi-exit networks, we study the advantages of taking decisions incrementally, exit-by-exit, by designing a lightweight reinforcement learning-based controller.
    Experimental results show that, as the rate of the ambient energy increases, energy- and  confidence-aware control schemes show approximately $5\%$ improvement in accuracy compared to their energy-aware confidence-agnostic counterparts. 
    Incremental approaches achieve even higher accuracy, particularly when the energy storage capacity is limited relative to the energy consumption of the inference model.
\end{abstract}

\begin{IEEEkeywords}
Intelligent processing, energy-aware deep learning, dynamic inference, energy harvesting, Markov decision process
\end{IEEEkeywords}

\section{Introduction}\label{sec:intro}
\IEEEPARstart{T}{he} widespread presence of interconnected devices, driven by pervasive and ubiquitous computing paradigms, continuously generates an unprecedented volume of data.
\gls*{ml} and \gls*{dl} methodologies have demonstrated substantial efficacy in decoding patterns and extracting knowledge from heterogeneous sensor data \cite{10177897, 9234528}, enabling accurate predictions and informed decisions in many domains such as smart healthcare \cite{9129779, shukla2022review}, human activity recognition \cite{10172911, 8767027}, and intelligent transportation \cite{9509348,9670465}.
 However, energy and computational demands of \gls*{dl} models are typically prohibitive for practical deployment on mobile devices, characterized by limited computing power and constrained memory relative to dedicated servers. For example, in computer vision applications, state-of-the-art vanilla attention-based architectures, such as vision transformers, involve a number of parameters ranging from $86$M to $632$M \cite{dosovitskiy2021an}, with the computational and memory cost of self-attention mechanism increasing quadratically with the image resolution \cite{papa2024survey}. 

\gls*{mec} has emerged to bring intelligence closer to edge devices, \cite{mao2017survey} enhancing their computational capacity by offloading tasks to the network edge. Yet, its reliance on stable connectivity introduces trade-offs between latency and energy efficiency. In harsh channel conditions, offloading demands high transmit power, often exceeding the energy cost of local processing. This limitation underscores the need for an intelligent device-edge continuum \cite{kokkonen2022autonomy}, operating regardless of network quality \cite{sonic}.
In this scenario, \gls*{eh} \cite{ma2019sensing} can be pivotal in achieving energy-autonomy, and ensuring the sustainability and longevity of provided services, particularly those deployed in remote or inaccessible locations whereby regular battery replacement becomes unrealistic and cost-prohibitive \cite{aldin2023comprehensive}.

A major challenge in the deployment of \glspl*{ehd} is the constrained and sporadic nature of the energy they capture.
Therefore, while minimizing the energy consumption is the main goal in \gls*{es}-operated devices in order to extend their lifespan, the primary objective with \gls*{ehd} is the intelligent management of available energy for prolonged operation. In principle, an \gls*{ehd} has access to a potentially infinite energy supply. However, this energy source is intermittent, requiring effective management to ensure stable operation and to mitigate the impacts of energy shortages. This involves developing strategies for dynamic adjustment of device activities based on energy availability.

Recent advancements in intermittent computing \cite{sonic,lucia2017intermittent} have enabled  \gls*{dl} in \gls*{eh} IoT devices \cite{dleh_survey, zhao2022towards,eperceptive, 9401799, 9218526}. Traditional \gls*{dl} models are typically designed for environments with consistent energy sources and do not account for the stochastic availability of energy in \gls*{eh} scenarios. To address this, adaptive \glspl*{dnn} \cite{dynamicnn_survey} can be employed. These models are capable of conditionally reducing the execution cost \textit{on-demand} at inference time, trading-off performance with energy consumption. In general, in adaptive \glspl*{dnn} several computing modes are available for processing sensor data. Associated with each mode is an energy cost, with more costly modes generally yielding more reliable and accurate predictions.

We focus on two different techniques to implement adaptive inference under random energy dynamics: multi-model selection (\gls*{mms}) \cite{lee2019mobisr, mcdnn, 6797059} and early exiting (\gls*{ee}) \cite{teerapittayanon2016branchynet, matsubara_scee_survey, Scardapane2020WhySW}. \gls*{mms} involves switching between different \glspl*{dnn} (computing modes), each with varying energy requirements, depending on the available energy. At the beginning of the inference process, i.e., when a new sample arrives, the system selects one of the available \glspl*{dnn} conditioned on the current energy availability. Multi-exit networks incorporate multiple classifiers (computing modes) at different layers, enabling the inference task to terminate at an earlier stage if a decision is made to reduce the computational workload, thus skipping the subsequent layers. The adaptability of \gls*{ee} makes it particularly appealing for \gls*{eh} systems \cite{bullo, 9401799, 9772720, 9218526}, enabling design and run-time flexibility depending on the \textit{availability of feedback information} in the decision-making process and the \textit{operational granularity} of the action selection. When feedback information is available regarding the prediction confidence at the current exit branch, the system can amortize the instantaneous energy consumption by momentarily pausing or halting the computation pipeline if sufficient reliability in the current prediction is reached. We call this \textit{\gls*{iaw}}  control. If no feedback information is available, then the decisions are based only on average (non-instantaneous) reliability metrics, e.g., accuracy on an unseen dataset available prior to deployment. This is referred to as \textit{\gls*{iag}} control. 
By operational granularity we refer to the adaptivity of the \gls*{ehd} decision making process. When \emph{\gls*{inc}} control is possible, a sequential decision process can be deployed, where the selection of an exit occurs progressively over time. Alternatively, \textit{\gls*{os}} control refers to a non-incremental approach, wherein the decision of a computing mode (exit branch or model) is made immediately upon the arrival of a sample. 
As a consequence, an \gls*{inc} controller operates at a finer level than its \gls*{os} counterpart, with the advantage of observing intermediate information and adjusting its action selection on-the-go. 
\begin{table}[t]
    \centering
    \begin{tabular}{p{2cm}*{3}{c}}
        \toprule
        & \multicolumn{2}{c}{\textbf{Availability of feedback information}} \\
        \cmidrule(lr){2-3}
        \textbf{Operational Granularity} & Instance-Aware (\gls*{iaw}) & Instance Agnostic (\gls*{iag})\\ \cmidrule(lr){1-1}
        \cmidrule(lr){2-3}
        One-shot (\gls*{os})& \gls*{os}-\gls*{iaw}-oracle & \gls*{mms} \\
        Incremental (\gls*{inc}) & \gls*{inc}-\gls*{iaw}-\gls*{ee}-DQN & \gls*{inc}-\gls*{iag}-\gls*{ee} \\ \bottomrule
    \end{tabular}%
    \caption{Summary of control methods studied in this paper.}
    \label{tab:methods_studied}
\end{table}
\begin{figure*}[ht!]
    \centering
    \input{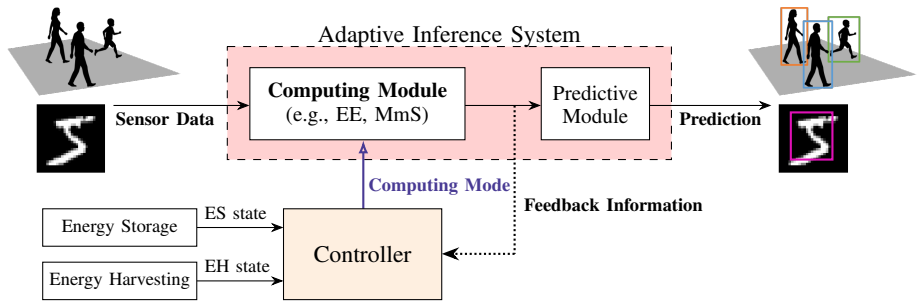}
    \caption{System diagram of an adaptive inference system for resource-constrained devices. Sensor data are processed by a computing module (e.g., \gls*{mms} or \gls*{ee} where the computation mode (e.g., model or exit selection), is regulated by the controller based on the \gls*{es} level, \gls*{eh} dynamics, and potential feedback information (e.g., prediction confidence). In adaptive \glspl*{dnn}, the predictive module represents the sequence of operations converting the output likelihoods into the final prediction. In this scenario, the goal of the controller is to trade-off the prediction accuracy for the processing energy cost.}
    \label{fig:sys_diagram}
\end{figure*}

In this paper, we focus on adapting neural inference workloads of \glspl*{ehd} to the available energy.
%
The system adapts by implementing the specific adaptation strategy (either \gls*{ee} or \gls*{mms}) contingent on the availability of feedback information and the operational granularity. Table~\ref{tab:methods_studied} shows the control schemes resulting from the combination of such characteristics. \gls*{ee}, due to its incremental nature, allows for the integration of feedback information into the decision-making process, thereby enabling instance-aware (\gls*{iaw}) and incremental (\gls*{inc}) control schemes. Conversely, \gls*{mms} cannot function incrementally, as transitioning between models necessitates re-processing the current input instance from the initial input layer of the new model. Consequently, \gls*{mms} operates exclusively in a \gls*{os} manner, which precludes the integration of any feedback information into the decision-making process. Hence, the only \gls*{mms} feasible controller is \gls*{os}-\gls*{iag}-\gls*{mms}, which we simply refer to as \gls*{mms}. Conversely, a \gls*{os}-\gls*{iaw} controller is referred to as an \textit{oracle} due to its impracticality, regardless of whether it pertains to \gls*{ee} or \gls*{mms}. In the absence of feedback information in the decision-making process (\gls*{iag}), \gls*{os}-\gls*{iag}-\gls*{ee} and \gls*{mms} schemes are mathematically equivalent. Therefore, \gls*{mms} is selected to represent \gls*{os}-\gls*{iag} controllers.  
%
%

In the overall system depicted in Figure~\ref{fig:sys_diagram}, we are concerned with the following fundamental question: how should inference in dynamic \glspl*{dnn} be optimized to deliver accurate predictions under stochastic availability of energy? 
Existing works predominantly focus on instance-agnostic short-term decision-making and heuristic approaches like policy networks and gating functions \cite{dynamicnn_survey}. For instance, the model presented in \cite{eperceptive} utilizes a single DNN implementing multi-resolution and \gls*{ee} to optimize energy usage. The exit selection relies on a lookup table of empirically measured charge times for short-term decisions.
Similarly, the authors in \cite{9218526} use an instance-agnostic \gls*{rl}-based policy for exit selection. The approach in \cite{9772720} implements runtime exit selection via an empirical joint threshold on the \gls*{es} level and entropy of prediction likelihoods. These methods are mostly heuristic, and lack theoretical justifications. In contrast, our work addresses the energy-constrained inference problem from a theoretical perspective. A simpler scenario involving a \gls*{dnn} with one early exit is studied our previous work \cite{bullo}.
Here, we delve into the generalized problem of optimal long-term scheduling of a finite number of computing modes, leveraging statistical information on \gls*{eh}. The problem is investigated for all the control schemes in Table~\ref{tab:methods_studied}.
The major contributions of this work are summarized as follows:

\begin{itemize}
    \item We formulate the problem as a sequential decision problem promoting accurate predictions under uncertain energy dynamics by encoding the accuracy of the decisions into the reward signal.
    \item We establish the structure of the optimal policy for the \gls*{mms} system, and show that is monotone in the \gls*{es} level;
    \item To have a principled upper-bound with theoretical guarantees for a multi-exit scheduler, we study a \gls*{os} oracle controller which has full information of the per-instance exit confidences. We extend our previous analysis \cite{bullo} to a more general case with a finite number of exits, characterizing the structure of the optimal policy.
    \item The optimal policy for the theoretical upper bound is based on the true, yet unknown, joint distribution of exit confidences. To numerically compute such a policy, we develop an approximate value iteration (VI) algorithm, which relies on the empirical distribution to estimate the expected value function in the Bellman equation.
    By leveraging the observed data, our method provides an effective approximation to the theoretical model.
    \item By addressing the \gls*{ee} selection as an instance-aware incremental control problem, we derive a sub-optimal policy using a \gls*{dqn} \cite{dqn}, referred to as \gls*{inc}-\gls*{iaw}-\gls*{ee}.
    \item We also consider a \gls*{os}-\gls*{iag} system when the device is instance-agnostic, and empirically assess the benefits of instance-awareness and incremental decisions.
    \item We test and compare these systems on a custom multi-exit EfficientNet-based model \cite{Efficientnet} and TinyImageNet \cite{le2015tiny} dataset, by examining the resulting policies and analyzing the accuracy performance under different \gls*{es} capacities and incoming energy rates.
\end{itemize}

The rest of the paper is organized as follows. Section~\ref{sec:sys_model} provides a detailed description of the system components, and a formal definition of the control schemes in Table~\ref{tab:methods_studied}. In Section~\ref{sec:optimization}, we formulate the optimization problem and establish the main theoretical results on optimal policies for the proposed controllers. Sections \ref{sec:exp_setup} and \ref{sec:exp_results} present the experimental setup and empirical results on the evaluation of our theoretical framework and comparison of the control schemes studied. Section~\ref{sec:conclusion} concludes the article and discuss potential future directions. Detailed derivations, including proofs of theorems, are provided in Appendices.

\section{System Model}\label{sec:sys_model}
In this section, we begin with a description
of the system model, outlining the key components of the proposed adaptive inference system. For clarity, Table~\ref{tab:notation} provides a summary of the main notation used in this work.
\begin{table}[t]
    \centering
    \begin{tabular}{ll}
        \toprule
        \textbf{Notation} & \textbf{Definition} \\
       \midrule
        $t$ &  Time slot index  \\
        $\rw_n$ & $n$-th random input\\
        $t_n$ &  Arrival time index of $\rw_n$  \\
        $K$ &  Number of available computing modes \\
        $\rh_t$ & Energy Harvesting (EH) condition process at $t$ \\
        $\re_t^H$ & Energy harvested at $t$  \\
        $b_{\text{max}}$ & Energy Storage (ES) maximum capacity \\ 
        $p_{\rh}^{h_i,h_j}$ & $\mathbb{P}(\rh_{t+1}=h_j|\rh_t=h_i)$\\
        $p_{\re^H}^{h}(e)$ & $\mathbb{P}(\re^H_t=e|\rh_t=h)$\\
        $\rb_t$ &  \gls*{es} level at $t$ \\
        $\ra_{t_n}$ &  Action selected at $t_n$ by a one-shot (Os) controller\\
        $\alpha_{t}$ &  Incremental sub-action (\textit{pause}/\textit{proceed}) selected at $t$\\
        $\ru_t$ &  Energy cost of $\ra_t$, $\ru_t=u(\ra_t)$ \\
        $\xi_t$ &  Exit index at $t$ (Incremental schemes) \\
        $\tau_t$ & Processing stage at $t$ (Incremental schemes) \\
        $\rvz_t$ & Feedback information at $t$ (prediction confidence)\\
        $\rvz^{(i)}_t$ & $i$-th component of the feedback information at $t$\\
        $\rvx_t$ &  System state at $t$ \\
        $\rvs_t$ & Discrete component of the system state at $t$ \\
        $\hat{\ry}_{t_n}^{(k)}$ & Prediction for $\rw_{n}$ output by the $k$-th comuting mode
    \end{tabular}
    \caption{Summary of used notations and their definitions.}
    \label{tab:notation}
\end{table}

We formulate the problem as a discrete-time \gls*{mdp} with constant-duration time slots indexed by $t\in\mathbb{N}$. We assume that the sensing apparatus (e.g., a camera) monitors the environment at a constant rate,
providing the \gls*{rcd} with the instance $\rw$ (e.g., an image) for processing every $T$ slots. Without any loss of generality, we assume that the data arrival process starts at $t=0$. Let $t_n=nT$, $n\in\mathbb{N}$, be the time index of the $n$-th data arrival. The controller selects the \gls*{dnn} model (\gls*{mms}) or exit branch (\gls*{ee}) at which the computation is halted or temporarily paused, and operates at one of the two granularity levels: (1) singularly and responsively to the arrival of an input sample $\rw$, that is at decision epochs $\{t_n\}_{n\in\mathbb{N}}$, with an inter-action time of $T$ slots (\textit{one-shot}); or (2) incrementally within the interval \([t_n, t_{n+1})\), at intermediate time slots \( t = t_n + \tau_t \), \( \tau_t = 0, \ldots, T-1\), $\forall n\in\mathbb{N}$ (\textit{incremental}). Precisely, $\tau_t=t\bmod T$. The action selection scheme is depicted in Figure~\ref{fig:slots}. 
\begin{figure}[t]
    \centering
    \usetikzlibrary{arrows.meta, positioning}

\begin{center}
\begin{scaletikzpicturetowidth}{0.47\textwidth}
\begin{tikzpicture}[scale=\tikzscale]
    ]
    
    \draw[color=gray] (-2,0) -- (-1.5,0);
    \draw[color=gray] (-.5,0) -- (4.5,0);
    \draw[color=gray] (6.5,0) -- (9.5,0);
    \node[anchor=east] at (-.5,0) {$\ldots$};
    \node[anchor=east,thick, color=NavyBlue] at (-.5,1.5) {$\ldots$};
    \node[anchor=center] at (5.5,0) {$\ldots$};
    \node[anchor=west] at (9.5,0) {$\ldots$};
    \foreach \x/\i [count=\j] in {-2/t_0,
                                  0/t_n,
                                  2/t_n+1,
                                  4/t_n+2,
                                  7/t_n+T-1, 
                                  9/t_{n+1}}  
        \draw (\x, 3pt) -- ++ (0,-6pt) node (b\j) [below] {\scriptsize$\i$};


\foreach \x/\i [count=\j] in {-2//0, 
                                  0/nT,
                                  2/nT+1,
                                  4/nT+2,
                                  7/nT+T-1, 
                                  9/(n+1)T}  
        \draw (\x,5pt) node (b\j) [above,align=center] {\scriptsize$\i$};

    \draw[thick, color=black] (0,5pt) -- ++ (0,-10pt);
    \draw[thick, color=black] (9,5pt) -- ++ (0,-10pt);




    \node [align=right,anchor=east, color=Purple!90!black] at (0,-31.pt) {\scriptsize \baselineskip=8pt Incremental:\par};

    \node [align=right,anchor=east, color=Peach!95!black] at (0,-42pt) {\scriptsize \baselineskip=8pt One-shot:\par};

    \draw[draw=none, fill=Peach, fill opacity=0.2] (0,-46pt) rectangle ++(9, 6pt);
    \draw[draw=none, fill=Purple, fill opacity=0.2] (0,-35pt) rectangle ++(4.5, 6pt);
    \draw[draw=none, fill=Purple, fill opacity=0.2] (6.5,-35pt) rectangle ++(2.5, 6pt);

    \draw[draw=Purple, thick] (0,-35pt) -- ++(0, 6pt);
    \draw[draw=Purple, thick] (2,-35pt) -- ++(0, 6pt);
    \draw[draw=Purple, thick] (4,-35pt) -- ++(0, 6pt);
    \draw[draw=Purple, thick] (7,-35pt) -- ++(0, 6pt);
    \draw[draw=Purple, thick] (9,-35pt) -- ++(0, 6pt);

    \node[draw=none] at (5.5, -28pt) {\textcolor{Purple}{$\dots$}};

    \draw[draw=Peach, thick] (0,-46pt) -- ++(0, 6pt);
    \draw[draw=Peach, thick] (9,-46pt) -- ++(0, 6pt);
    
    \draw[->, thick, color=NavyBlue] (-2,31pt) node [above] {$\rw_0$} -- (-2,21pt);
    \draw[->, thick, color=NavyBlue] (0,31pt) node [above] {$\rw_n$} -- (0,21pt);
    \draw[->, thick, color=NavyBlue] (9,31pt) node [above] {$\rw_{n+1}$} -- (9,21pt);
    %
    %
    %
\end{tikzpicture}
\end{scaletikzpicturetowidth}
\end{center}
    \caption{Operational granularity of action selection.}
    \label{fig:slots}
\end{figure}
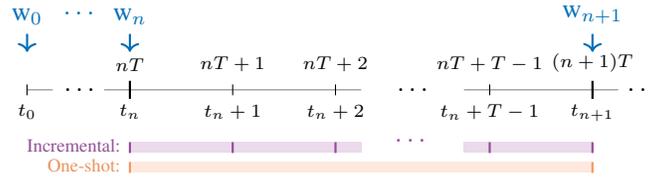

Each action is selected based on the state of the stochastic process describing the \gls*{eh} dynamics and the \gls*{es} level. Moreover, should the system considers a feedback information (\gls*{iaw}), prediction confidence can be accounted in the action selection. In the following sections we provide a detailed description of the mathematical models of each component.

\subsection{Adaptive \gls*{dnn} Model}
We define an adaptive \gls*{dnn} as $f(w; \theta)$, where $w\in\mathbb{R}^q$ and $\theta\in\mathbb{R}^d$ are the input and the trainable parameter vectors, respectively.
Without loss of generality, $f$ is considered as the composition of $L$ differentiable operators $l_i,\ i=1,\dots,L$, e.g., $l_i$ represents a convolutional layer in a \gls*{dnn}. The final output is denoted as \mbox{$\hat{y} = f(w;\theta) $}. In a multi-exit \gls*{dnn}, let $K\leq L$ be the number of exit branches added to the NN structure. In order to provide a valid prediction at each exit branch, task-specific classifiers with comparatively negligible processing cost 
are required.
We define the sub-network from the input layer to the $k$-th exit branch as $f(\cdot;\theta^{(k)})$, where $\theta^{(k)}$ represents the set of \gls*{dnn} parameters involved to the computation up to the $k$-th exit. Hence, $\hat{y}^{(k)}\triangleq f(w;\theta^{(k)})$ is the output at the $k$-th exit, and the final output is $\hat{y}=f(w;\theta^{(K-1)})=\hat{y}^{(K-1)}$.


Similarly, in the \gls*{mms} scenario, $K$ refers to the different models $f_k(\cdot; \theta^{(k)})$, $k=0,1,\dots,K-1$ to be independently trained and deployed at the device. At the beginning of the inference process, the controller chooses one of the models to execute.
We refer to each exit branch or model as a \textit{computing mode}. Associated with the $k$-th mode is a fixed processing energy cost $u\big( f(\cdot;\theta^{(k)}) \big)$, which, with a slight abuse of notation, is denoted as $u(k)$. Hence, processing modes that involve more parameters are more costly to perform, yet they output more reliable and accurate predictions. In this paper, we consider the $0$-th computing mode as a random-guesser that outputs energy-free random predictions. 

\subsection{Energy Provision Model}\label{sec:energy_provision_model}
We characterize the \gls*{eh} as a Markov-modulated process where $\rh_t\in\mathcal{H}\triangleq \{h_1, \dots, h_{|\mathcal{H}|}\}$ describes the environment states, with transition probabilities $p_{\rh}^{i,j}\triangleq \mathbb{P}(\rh_{t+1}=h_j|\rh_t=h_i)$.
The energy $\re^H_t\in\mathcal{E}^H=\{e^H_1,\dots,e^H_{|\mathcal{E}_H|}\}$ provided by the harvesting circuit depends on environment state $\rh_t$. Due to the uncertainty of environmental conditions, we model $\re^H_t$ as a discrete random variable with \gls*{pmf} $p_{\re^H}^{h}(e)\triangleq\mathbb{P}(\re^H_t=e|\rh_t=h)$, $e\in\mathcal{E}^H, h\in\mathcal{H}$.
The harvested energy $\re^H_t$ is used for the current inference task. Should $\re^H_t$ exceed the computation energy demand $\ru_t$, the excess is stored in the \gls*{es}, e.g., battery or supercapacitor. Conversely, if $\re^H_t$ is insufficient for the task's demands, the deficit is compensated by drawing the necessary additional energy from the \gls*{es}, when available. The \gls*{es} is modeled as a discrete buffer of energy packets with finite capacity $b_{\text{max}}$ \cite{opt_strategies_remote, michelusi, 4480065}. The energy level at time $t$ is denoted by $\rb_t=\{0,1, \dots, b_{\text{max}}\},$ and evolves according to 
\begin{align}\label{eq:evolution_battery}
    \rb_{t+1} = \min\{[\rb_t - \ru_t + \re^{H}_t]^+, b_{\text{max}}\}.
\end{align}
\begin{figure*}
    \centering
    \begin{subfigure}[b]{0.24\textwidth}
         \centering
         \includegraphics[scale=0.39]{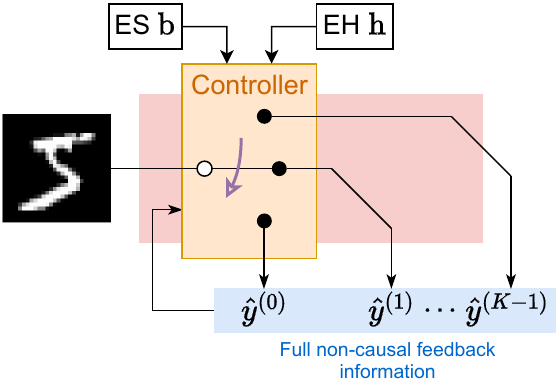}
         \caption{}
         \label{fig:4a}
     \end{subfigure}
     \hfill
     \begin{subfigure}[b]{0.24\textwidth}
         \centering
         \includegraphics[scale=0.39]{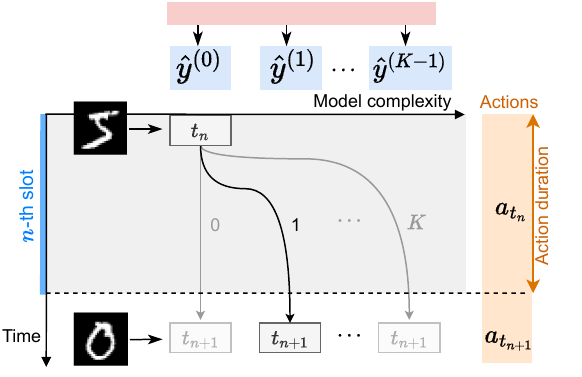}
         \caption{}
         \label{fig:4b}
     \end{subfigure}
     \hfill
     \begin{subfigure}[b]{0.24\textwidth}
         \centering
         \includegraphics[scale=0.39]{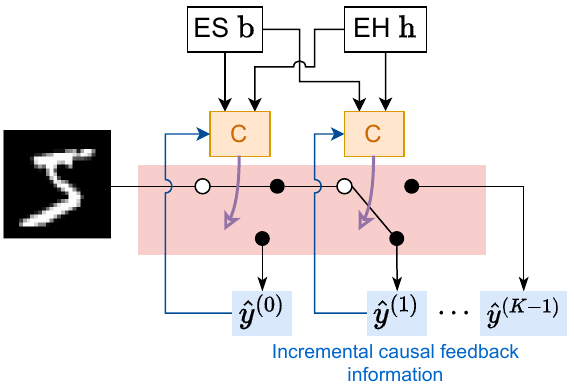}
         \caption{}
         \label{fig:4c}
     \end{subfigure}
     \hfill
     \begin{subfigure}[b]{0.24\textwidth}
         \centering
         \includegraphics[scale=0.39]{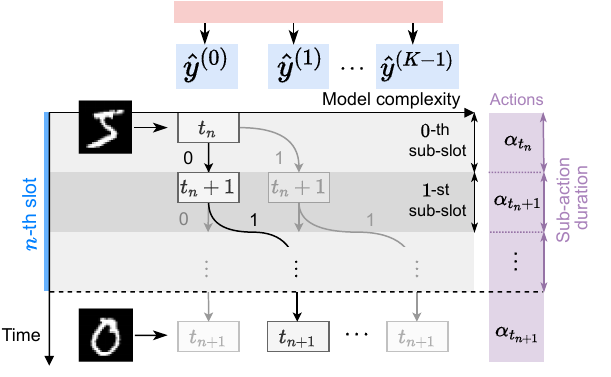}
         \caption{}
         \label{fig:4d}
     \end{subfigure}
    \caption{Controlled computing modules defined by the granularity of action selection and the availability of feedback information. (a) and (b) depict the control scheme and an the temporal dynamics of an oracle \gls*{os}-\gls*{iaw} controller, respectively. In the absence of feedback, this model reduces to the \gls*{mms} scheme. Conversely, (c) and (d) represent \gls*{inc} controls applicable to multi-exit networks: under causal feedback, the \gls*{inc}-\gls*{iaw}-\gls*{ee} scheme is realized, whereas the absence of feedback yields the \gls*{inc}-\gls*{iag}-\gls*{ee} scheme.}
    \label{fig:control_schemes}
\end{figure*}


\subsection{Instance-Aware Schemes}
Instance-aware methods integrates the prediction confidence of an adaptive \gls*{dnn} as feedback within a closed-loop control scheme. We first describe an oracle \gls*{os}-\gls*{iaw} controller having full information of the per-instance realizations of exit confidences. As mentioned in Section~\ref{sec:intro}, the theoretical derivations for \gls*{os}-\gls*{iaw} fit both \gls*{ee} and \gls*{mms}, since, mathematically, these two schemes are equivalent.

\subsubsection{One-Shot Instance-Aware (OsIAw) Controller (Oracle)}\label{sec:oneshot_appproach}
The one-shot controller operates every $T$ slots, that is whenever a new instance $\rw$ is provided by the sensor apparatus. Assuming that each \gls*{ee} step takes one time slot, it follows that $T\geq K-1$. For $T\gg K-1$, the time available for decision-making, thus for \gls*{eh}, is significantly longer than the time required for the most computationally intensive mode. As a result, the system experiences fewer constraints. Conversely, when $T=K-1$ the system is more constrained, as the time available for decision-making is reduced, leading to diminished \gls*{eh} within the decision epoch.

We model the problem as a discrete-time \gls*{mdp} $\langle \mathcal{X}, \mathbb{P}, \mathcal{A}, r \rangle$, with $t\in\mathbb{N}$ representing the time slot index, and $\mathcal{X}, \mathbb{P}, \mathcal{A}, r$ being the state space, the transition kernel, the action space and the reward, respectively. The decision epochs occur at time index $t_n$, and the controller adapts the computing mode (\gls*{dnn} model or exit branch) $\ra_{t_n}\in\mathcal{A}\triangleq\{0,\dots,K-1\}$. 


At time $t_n$, the state of the controller is defined as 
\begin{equation}\label{eq:oneshot_state}
    \rx_{t_n} = (\rvs_{t_n}, \rvz_{t_n}),
\end{equation}
where $\rvs_{t_n} \triangleq (\rb_{t_n}, \rh_{t_n})$ is the joint energy level and environment state, with $\mathcal{S}=\mathcal{B}\times\mathcal{H}$, and $\rvz_{t_n}$ encodes information about the \gls*{dnn} prediction quality for the current input instance. Ideally, $\rvz_{t_n}\in\{0,1\}^K$ denotes the correctness vector, where index $i$ contains a $1$ if the prediction at the corresponding computing mode is correct. However, at inference time the true label $\ry_{t_n}$ is unknown. Therefore, we define  $\rvz_{t_n}\in [0,1]^K$ as a measure of the vector of \textit{correctness likelihoods} of \gls*{dnn} predictions, i.e., confidence levels. Precisely, our objective is for the $i$-th component of $\rvz_{t_n}$ to represent the probability that the output prediction at the $i$-th computing mode is correct, given that its correctness likelihood estimate $g^{(i)}(\rw_{t_n})$ is $p$, i.e.,
\begin{align}
    \rvz^{(i)}_{t_n}=\mathbb{P}\Big(\ry_{t_n}=\hat{\ry}^{(i)}_{t_n}\mid g^{(i)}(\rw_{t_n})=p\Big).
\end{align}
When the predictive module is a \gls*{dnn}, $g^{(i)}(\rw_{t_n})=\max_j f_j(\rw_{t_n};\theta^{(i)})$, where $f_j(\rw_{t_n};\theta^{(i)})$ is the $j$-th component of the softmax output of the \gls*{dnn}, and $\hat{\ry}^{(i)}_{t_n}=\arg\max_j f_j(\rw_{t_n};\theta^{(i)})$ is the predicted label. Moreover, if the \gls*{dnn} is perfectly calibrated, then $\rz^{(i)}_{t_n}=g^{(i)}(\rw_{t_n})$ \cite{guo2017calibration,minderer2021revisiting}. Therefore, we will adopt the per-instance confidences of a calibrated model as an estimate measure of the \textit{correctness likelihoods} of \gls*{dnn} predictions.
We assume that $\rvz_{t_n}$ possesses a joint \gls*{pdf} $p_\rvz(z)$, $\vz\in [0, 1]^K$, and that $\{\rvz_{t_n}\}_{n\geq 0}$ are independent and identically distributed (iid). Note that the latter assumption is a direct consequence of assuming that $\{\rw_{t_n}\}_{n\geq 0}$ are iid, which is typical in machine learning applications for in-distribution instances.

Again, knowing $\rz_t$ for each new sample at the beginning of the slot requires the controller to have prior access to the confidences at all \gls*{dnn} computing modes, without paying the energy cost of running them. This is clearly unfeasible, thus we refer to such a controller as an \textit{oracle}.

Note that at each decision epoch, the controller has access to $\rb_{t_n}$ and $\rh_{t_n}$, but the value of $\re^H_{t}$ is uncertain for future time instances $t=t_n, t_n+1, \dots, t_{n+1}-1$. Therefore, the controller lacks foresight regarding potential energy outages, and the action selection process must be constrained to guarantee $\ru_{t_n}\leq \rb_{t_n}$ and assure continuous operations. Formally, at each state $\rx_{t_n}$ the feasible action space is $\mathcal{A}_{\rx_{t_n}}\triangleq \{a\in\mathcal{A}: u(a)\leq \rb_{t_n} \}$,
where $u(a)$ denotes the energy cost of $a$.
For example, if the \gls*{es} depletes ($\rb_{t_n}=0$), then only an energy-free random guess can be performed ($\ra_{t_n}=0$), regardless of the harvested energy in the current slot. In general, the smaller $b_{\text{max}}$, the more constrained the device is in its action selection process. Figures~\ref{fig:control_schemes}(a) and \ref{fig:control_schemes}(b) illustrate the \gls*{os} decision-making process with full non-causal feedback information.

The goal of a neural harvesting system is to continuously perform \gls*{dnn}-based inference under uncertain energy dynamics. When adaptive inference mechanisms (e.g., \gls*{ee}) are available, the goal is to accomplish inference tasks as accurately as possible while matching the energy constraints. To capture this behaviour, it is important that the reward encodes information on the performance of the system. Therefore, we define the reward function $r:\mathcal{S}\times\mathcal{Z}\times\mathcal{A}\to [0,1]$ for the one-shot controller as the $\ra_{t_n}$-th component of the likelihood vector; that is, the correctness likelihood of the selected computing mode: 
\begin{align}\label{eq:oneshot_reward}
    r(\rvs_{t_n}, \rvz_{t_n}, \ra_{t_n}) = \rvz^{(\ra_{t_n})}_{t_n}.
\end{align}


\subsubsection{Incremental Instance-Aware (IncIAw) \gls*{ee} Controller}
To fully exploit the intrinsic incremental nature of \gls*{ee} while containing the limitations in the action selection of the one-shot oracle controller, we study a sequential decision scenario modeled as a discrete-time \gls*{mdp} $\langle \mathcal{X}_{\text{inc}}, \mathbb{P}_{\text{inc}}, \mathcal{A}^{\text{inc}}, r_{\text{inc}} \rangle$, where actions $\ra_{t_n}$ are intended as incremental compositions of sub-actions.

The incremental controller operates in each slot $t$, where it chooses
a sub-action $\alpha_{t}$ to decide whether to \textit{pause} the computation and switching to idle mode, $\alpha_{t}=0$, or \textit{proceed} with the computation of the next exit,  $\alpha_{t}=1$. 
We denote the sub-action space as $\mathcal{A}^{\text{inc}}\triangleq\{0,1\}$. The decision process starts with an energy-free random prediction (exit 0), and proceeds by incrementally choosing sub-actions $\alpha_t$ in subsequent slots. Every $T$ slots, the computation for the current instance $\rw_{t_n}$ terminates, and the system becomes ready for a new sample $\rw_{t_{n+1}}$. For example, in a \gls*{dnn} implementing $K=3$ exits, setting $\alpha_{t_n}=0$ corresponds to performing an energy-free random prediction in slot $t_{n}$. With $\alpha_{t_n+1}=1$, the input instance is processed up to the first exit in the following slot $t_{n}+1$, resulting in $\ra_{t_n}=2$. Figures~\ref{fig:control_schemes}(c) and \ref{fig:control_schemes}(d)  illustrate the incremental decision-making process with causal feedback information.

At time $t$, the state of the controller is defined as 
\begin{equation}
    \rvx_t = (\rb_t, \rh_t, \xi_t, \tau_t, \rvz_t),
\end{equation}
with $\rb_t,\rh_t$ being the \gls*{es} and the \gls*{eh} processing states, $\xi_t=0,\dots,K-1$ representing the index of the current exit, $\tau_t = 0,\dots, T-1$ denoting the processing stage of the current input, that is $\tau_t=t\bmod T$, and $\rz_t\in[0,1]$ representing the correctness likelihood of the current exit $\xi_t$. This can be expressed as the $\xi_t$-th component of the vector of correctness likelihoods introduced in Section~\ref{sec:oneshot_appproach}, that is $\rz_t=\rvz_{t_n}^{(\xi_t)}$. Therefore, conversely to the one-shot formulation, whereby the whole likelihood vector is known, in the incremental approach it can be revealed progressively as the computation proceeds, depending on the sub-actions selected. 
This makes the incremental approach feasible since the needed information is causal. As before, $\rvs_t=(\rb_t, \rh_t, \xi_t, \tau_t)\in\mathcal{S}_{\text{inc}}$ represents the discrete component of the state, and we refer to the set $\{t_n, t_n+1, \dots, t_{n}+T-1\}$ as the processing stage of the $n$-th input instance.

Although at the beginning of the $n$-th processing stage, the value of $\re^H_{t_n}$ is still uncertain, intermediate realizations $\rb_{t}$ and $\rh_t$, $t=t_n+1, \dots, t_{n}+K-2$ can be observed, perhaps improving the estimation of future \gls*{eh} events and correcting the action selection on-the-go. This provides the controller with predictive insight into potential energy outages and overflow, enabling the execution of the $i$-th exit even when its total energy required 
 to produce an output exceeds the amount of energy available when the computation started, i.e. $\rb_{t_n}$. For example, in a system with $K=3$ exits and empty \gls*{es}, i.e., $b_{t_n}=0$, the one-shot controller is forced to select a random prediction. However, the incremental controller can switch to idle in the first slot and, in the case of sufficient energy provision, it can proceed further with the computation, potentially improving the quality of its prediction. Therefore, in each state $\rvx_{t}$ the feasible action space is $\mathcal{A}^{\text{inc}}_{\rvx_{t}}\triangleq \{\alpha\in\mathcal{A}^{\text{inc}}: u_{\text{inc}}(\alpha, \rvx_t)\leq \rb_{t}\}$, where $u_{\text{inc}}(\alpha, \rvx_t)$ is the energy cost of $\alpha$ in state $\rvx_t$, specifically at exit $\xi_t$.

Similarly to the one-shot approach, the reward function $r_{\text{inc}}:\mathcal{S}_{\text{inc}}\times[0,1]\times\mathcal{A}^{\text{inc}}\to [0,1]$ is defined as 
\begin{align*}
	r_{\text{inc}}(\rvs_{t}, \rz_{t}, \alpha_t) = \begin{cases}
	0 & \text{for }\tau_t=0,1,\ldots,T-2, \\ \rvz^{(\xi_t+\alpha_t)}_{i(t)} & \text{if } \tau_t=T-1
\end{cases},
\end{align*}
where $i(t)=\lceil\frac{t}{K-1}\rceil$, with $\lceil\cdot\rceil$ representing the ceil rounding operator. In words, a null reward is collected for intermediate steps. At final slot $t_n-1$, the reward is non-zero at the end of the $n$-th processing stage, where a decision is made by the controller.


\subsection{Instance-Agnostic (IAg) Schemes}
To asses the importance of taking actions based on the per-instance confidence values, we design both \gls*{os}- and \gls*{inc}-\gls*{iag} controllers. These controllers operate over the identical state space as their \gls*{iaw} counterparts, but disregard the per-instance confidence values. We model the systems as discrete-time \glspl*{mdp}, where the state and action spaces are $\mathcal{S}$, $\mathcal{A}$ and $\mathcal{S}^{\text{inc}}$, $\mathcal{A}^{\text{inc}}$, respectively. The performance of the $k$-th computing mode is measured by its prediction accuracy $\rho^{(k)}$, computed over a test dataset $\mathcal{D}$ as
\begin{align}
    \rho^{(k)} \triangleq \frac{1}{|\mathcal{D}|}\sum_{i=1}^{|\mathcal{D}|}\mathds{1}_{\{\hat{y}^{(k)}_i=y_i\}},
\end{align}
where $\hat{y}^{(k)}_i$ and $y_i$ denote the predicted label produced by the $k$-th mode, and the ground truth label, respectively, for the $i$-th input sample in $\mathcal{D}$. 
The rewards observed when action $a$, or sub-action $\alpha$, is selected in state $s$, respectively, are:  
\begin{align}
    r(\vs,a) &= \rho^{(a)},\\
    r_{\text{inc}}(\vs,\alpha) &= \begin{cases}
	0 & \text{for }\tau=0,1,\ldots,T-2, \\ \rho^{(\xi+\alpha)} & \text{if } \tau=T-1
    \end{cases}
\end{align}
In words, when the $k$-th exit classifier is selected, 
the reward observed by the controller is the accuracy of the $k$-th classifier. Hence, \gls*{iag} schemes use instance information from $\mathcal{D}$ to compute $\rho^{(k)}$, but this estimate is fixed, and does not change as a function of the confidence of the current input instance, $\rvz_t$. Note that, when the \gls*{dnn} is perfectly calibrated, the reward received by selecting the $k$-th mode matches the mean confidence level of that classifier, that is $\rho^{(a)}=(1/|\mathcal{D}|)\sum_{i=1}^{|\mathcal{D}|}\vz_i^{(a)}$.




\subsection{Energy Considerations}
 Let $E^{(i)}$ denote the cumulative energy required for processing an input instance up to the $i$-th exit.
Therefore, $E^{(1)}\leq E^{(2)}\leq\dots\leq E^{(K)}$.
When an \gls*{ee} is selected, i.e., $\ra_{t_n}\neq K$, the task's energetic demand decreases, $\ru_{t_n}<E^{(K)}$, and a reduced operating power
can be used to accomplish the inference task by the end of the slot. In the case of an energy outage, i.e., $\rb_{t_n} - E^{(1)} + \sum_{\tau=0}^{T-1}\re^H_{t_n+\tau}< 0$,
none of the \gls*{dnn} executions are feasible, forcing an energy-free random guess of the current instance label. Conversely, in situations of energy overflow, $\rb_{t_n}- \ru_{t_n} + \sum_{\tau=0}^{T-1}\re^H_{t_n+\tau}> b_{\text{max}}$,
the finite storage capacity of the \gls*{es} prevents further energy accumulation, causing potential lost opportunities for future \gls*{dnn} executions.

\section{System Optimization}\label{sec:optimization}


Given an initial state $x_0$, the goal is to find a one-shot stationary policy $\pi:\mathcal{S}\times\mathcal{Z}\times \mathcal{A}\to\mathscr{P}(\mathcal{A})$ and an incremental stationary policy $\pi_{\text{inc}}:\mathcal{S}_{\text{inc}}\times[0,1]\times \mathcal{A}^{\text{inc}}\to\mathscr{P}(\mathcal{A}^{\text{inc}})$, which maximize the infinite-horizon discounted reward, that is 
\begin{align}
	\pi^*(\vx) &= \argmax_{\pi\in\Pi} \mathbb{E}_\pi\Bigg[\sum_{n=0}^\infty \gamma^{n} r(\rvx_{t_n},\ra_{t_n}) \Big\lvert \rx_0=x\Bigg],\\
	\pi_{\text{inc}}^*(\vx) &	= \argmax_{\pi_{\text{inc}}\in\Pi_{\text{inc}}} \mathbb{E}_{\pi_{\text{inc}}}\Bigg[\sum_{t=0}^\infty \gamma_{\text{inc}}^{t} r_{\text{inc}}(\rvx_t,\alpha_t) \Big\lvert \rx_0=\vx\Bigg],
\end{align}
where $\gamma$ is a discount factor. Specifically, in order for $\pi^*$ and $\pi_{\text{inc}}^*$ to be comparable, it must holds that $\gamma_{\text{inc}}^{t_{n+1}-1}=\gamma^{n}$ for all $n\in\mathbb{Z}_{\geq1}$. This is because $r_{\text{inc}}$ is non-zero only when $\tau_t=T-1$, that is $t=t_n+T-1=t_{n+1}-1$, $\forall n\geq0$.


The following lemma shows the connection between the one-shot and the incremental approaches.
\begin{lemma}\label{lem:incr_corr}
     For any input instance $\rw_{t_n}$, and any specified action $a_{t_n}$, there exists a sequence of $K-1$ sub-actions $\{\alpha_{t_n+\tau}\}_{\tau=0}^{T-1}$ that would corresponds to $\ra_{t_n}\in\mathcal{A}$. Such correspondence can be expressed in closed-form as
     \begin{align*}
         \ra_{t_n}=\sum_{\tau=0}^{T-1}\alpha_{t_n+\tau}.
     \end{align*}
     Therefore, an optimal policy $\pi_{\text{inc}}^*:\mathcal{S}\times \Lambda\to\mathscr{P}(\Lambda)$ with respect to a generic objective in the incremental formulation is at least as good as an optimal policy for the same objective in the one-shot formulation $\pi^*:\mathcal{S}\times \mathcal{A}\to\mathscr{P}(\mathcal{A})$.
\end{lemma}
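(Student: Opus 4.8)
The plan is to prove the lemma by an \emph{embedding} argument: I will first establish the closed-form correspondence between a one-shot action and a committed sequence of incremental sub-actions, then show that this correspondence lets every one-shot policy be simulated by an admissible incremental policy attaining the same discounted return, and finally conclude by comparing the two optimal values.

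First I would verify the identity $\ra_{t_n}=\sum_{\tau=0}^{T-1}\alpha_{t_n+\tau}$ directly from the dynamics of the incremental controller. The processing stage of $\rw_{t_n}$ begins at the random-guess exit $\xi_{t_n}=0$; by construction each \emph{proceed} sub-action ($\alpha_t=1$) advances the exit index by one, while each \emph{pause} ($\alpha_t=0$) leaves it unchanged, so the exit reached by the end of the stage is exactly the number of proceeds, i.e. $\sum_{\tau=0}^{T-1}\alpha_{t_n+\tau}$. Conversely, any target $a\in\mathcal{A}=\{0,\dots,K-1\}$ is realizable by a sequence containing exactly $a$ proceeds, which is possible because $T\ge K-1$ leaves enough slots. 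This yields both the closed form and the surjectivity of the correspondence onto $\mathcal{A}$.

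Next I would turn the correspondence into a policy embedding. Given any one-shot policy $\pi$, I construct an incremental policy $\tilde\pi_{\text{inc}}$ that, at each decision epoch $t_n$, reads $a_{t_n}=\pi(\rx_{t_n})$ and then commits --- ignoring all intermediate feedback --- to the sub-action sequence with $a_{t_n}$ proceeds followed by pauses. Three facts must be checked. \textbf{(i) Feasibility}: if the one-shot constraint $u(a_{t_n})\le\rb_{t_n}$ holds, then the per-slot incremental constraint $u_{\text{inc}}(\alpha_t,\rvx_t)\le\rb_t$ holds throughout the stage, because energy is spent incrementally while the battery recursion~\eqref{eq:evolution_battery} only adds harvested energy, so the running level never drops below what the aggregate budget guarantees. \textbf{(ii) Reward matching}: the incremental reward is nonzero only at $\tau_t=T-1$, where it equals $\rvz^{(\xi_t+\alpha_t)}=\rvz^{(a_{t_n})}_{t_n}$, precisely the one-shot reward $r(\rvs_{t_n},\rvz_{t_n},a_{t_n})$. \textbf{(iii) Discount alignment}: using the stated relation $\gamma_{\text{inc}}^{t_{n+1}-1}=\gamma^{n}$, the $n$-th terminal incremental reward carries the same weight as the $n$-th one-shot reward. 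Together these give $\mathbb{E}_{\tilde\pi_{\text{inc}}}[\,\cdot\,]=\mathbb{E}_{\pi}[\,\cdot\,]$ for the two objectives.

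Finally, since $\tilde\pi_{\text{inc}}$ is one admissible policy in $\Pi_{\text{inc}}$, optimality of $\pi_{\text{inc}}^*$ gives $\mathbb{E}_{\pi_{\text{inc}}^*}[\,\cdot\,]\ge \mathbb{E}_{\tilde\pi_{\text{inc}}}[\,\cdot\,]=\mathbb{E}_{\pi}[\,\cdot\,]$; taking the supremum over one-shot policies $\pi\in\Pi$ yields the claim. I expect the main obstacle to be the feasibility step (i): one must argue carefully from the battery recursion that committing to the decomposition never violates an intermediate energy constraint, and that the incremental transition kernel is consistent with the one-shot kernel on the embedded trajectories. The reward-matching and discounting bookkeeping are then routine.
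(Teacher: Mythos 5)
The paper states Lemma~\ref{lem:incr_corr} without supplying a proof in the appendices, so there is no official argument to compare against; your embedding construction is the natural way to prove it, and its overall architecture (closed-form correspondence, simulation of a one-shot policy by a committed sub-action sequence, comparison of optimal values) is sound. Two points, however, need more care than your sketch gives them.

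First, you identify feasibility as the main obstacle and handle the floor $[\cdot]^+$ in \eqref{eq:evolution_battery} correctly: with proceeds front-loaded, $\rb_{t_n+k}\geq \rb_{t_n}-\sum_{j<k}\ru_{t_n+j}$, so the aggregate budget $u(a_{t_n})\leq\rb_{t_n}$ guarantees every per-slot constraint. But the step that actually threatens your claimed \emph{equality} $\mathbb{E}_{\tilde\pi_{\text{inc}}}[\cdot]=\mathbb{E}_{\pi}[\cdot]$ is the ceiling, not the floor. Deducting $u(a_{t_n})$ in one lump at $t_n$ keeps the battery lower during the stage than spreading the same expenditure over several slots, so the spread-out trajectory can hit $b_{\text{max}}$ at an intermediate slot and lose harvested energy to overflow that the one-shot trajectory retains. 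When that happens the embedded policy enters the next stage with strictly less stored energy, the induced state distributions diverge, and the value equality fails. You should therefore add the observation that an intermediate clip during the spending phase requires the cumulative harvest to exceed the cumulative incremental cost; under the paper's cost model (each proceed costs at least one energy unit per slot, which upper-bounds the per-slot harvest in their setting) this cannot occur with proceeds front-loaded, and after the spending phase the remaining slots are pure accumulation, for which per-slot and end-of-stage clipping coincide. Stating this condition explicitly is what closes the argument; without it the lemma is only established for cost/harvest regimes where front-loaded spending never overflows mid-stage.

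Second, the policy $\tilde\pi_{\text{inc}}$ you build is not a stationary Markov policy on the incremental state space: committing to $a_{t_n}$ proceeds requires remembering the decision made at the start of the stage, and that decision depends on the full confidence vector $\rvz_{t_n}$ and on $\rb_{t_n}$, neither of which is recoverable from the intermediate state $(\rb_t,\rh_t,\xi_t,\tau_t,\rz_t)$. The conclusion still follows, but you should route it through the standard fact that in a discounted MDP the optimal stationary policy attains the supremum over all history-dependent policies, so $\mathbb{E}_{\pi_{\text{inc}}^*}[\cdot]\geq\mathbb{E}_{\tilde\pi_{\text{inc}}}[\cdot]$ even though $\tilde\pi_{\text{inc}}$ lives outside the stationary class. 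With these two additions your proof is complete.
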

As a consequence, 
the incremental approach offers a more fine-grained control strategy, ensuring performance at least as good as the one-shot alternative. 
In the next section, we characterize the structure of optimal policies for the \gls*{os}-\gls*{iaw} controller and \gls*{mms}, and we describe the \gls*{dqn}-based optimization of the \gls*{inc}-\gls*{iaw}-\gls*{ee} controller.

\subsection{Optimal Policy for the \gls*{os}-\gls*{iaw} controller}\label{sec:oneshot_opt_policy}
In order to characterize the structure of the optimal policies we recall the definition of value-function and Q-function. The Bellman optimality equation for the svalue function is defined as 
\begin{align*}
        v^*(\vx) &= \max_{a\in\mathcal{A}_x}\mathbb{E}_{\rx'}\big[r(\vx,a)+\gamma v^*(\rx')\big],
\end{align*}
where $v^*(\vx)\triangleq \max_{a\in\mathcal{A}_\vx}q^*(\vx,a)$, and
\begin{align}\label{eq:q_function}
	q^*(\vx,a) \triangleq \mathbb{E}_{\rx'}\big[r(\vx,a) + \gamma \max_{a'\in\mathcal{A}_\vx} q^*(\rx',a')\big].
\end{align}

\begin{theorem}[Optimal value function]\label{th:pwl_vf}
The optimal value function for the \gls*{os}-\gls*{iaw} controller is piece-wise linear in $\vz\in\mathcal{Z}$
\begin{align*}
    v^*(\vs,\vz) &= \sum_{j=0}^{K-1} \Big(e^\top_j\vz + \gamma \sum_{\vs'\in\mathcal{S}} \mathbb{P}(\vs'|\vs,a_j)\bar{v}^*(\vs')\Big)\mathds{1}_{\{\vz\in\mathcal{Z}_j(\vs)\}},
\end{align*}
where $\bar{v}^*(\vs)\triangleq \mathbb{E}_{\rvz}[v^*(\vs,\rvz)]$, and $\mathcal{P}_\vs\triangleq\{\mathcal{Z}_j(\vs)\}_{j=0}^{K-1}$ forms a partition of $\mathcal{Z}$ for each state $\vs\in\mathcal{S}$. Moreover, the partition $ \mathcal{P}_s$ can be parameterized by at most $K-1$ real values, and can be expressed as
\begin{align*}
    \mathcal{Z}_j(\vs) = \{\vz\in\mathcal{Z}: \mM_j \vz\geq\mF_j\vdelta(\vs)\},\quad j=0,\dots,K-1,
\end{align*}
where $\vdelta(\vs)=[\delta_{0i}, i=1,\dots,K-1]\in\mathbb{R}^{K-1}$ is a threshold vector with $\delta_{0i}=\gamma\big(\mathbb{P}_{a_0}(s) - \mathbb{P}_{a_i}(\vs)\big)\bar{v}^*$, and $\mF_j\in\{0,\pm1\}^{K-1\times K-1}$, $\mM_j\in\{0,\pm 1\}^{K-1\times K}$ are appropriate matrices.
\end{theorem}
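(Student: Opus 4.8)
The plan is to exploit the one structural feature that makes this tractable: the confidence vector of the next instance is drawn i.i.d.\ and is independent of the current state and action, while the reward $r(\vs,\vz,a)=\vz^{(a)}=e_a^\top\vz$ is affine in $\vz$. First I would substitute this reward into the Bellman optimality equation and observe that the continuation term factorizes. Since the next confidence is a fresh draw with the same law as $\rvz$ and only $\vs'$ is shaped by the action through the energy dynamics,
\begin{align*}
\mathbb{E}_{\rx'}\big[v^*(\rx')\big]=\sum_{\vs'\in\mathcal{S}}\mathbb{P}(\vs'\mid\vs,a)\,\mathbb{E}_{\rvz}\big[v^*(\vs',\rvz)\big]=\sum_{\vs'\in\mathcal{S}}\mathbb{P}(\vs'\mid\vs,a)\,\bar v^*(\vs').
\end{align*}
This collapses the self-reference: the continuation depends on $a$ only through the scalar $Q_a(\vs)\triangleq\gamma\sum_{\vs'}\mathbb{P}(\vs'\mid\vs,a)\bar v^*(\vs')$, constant in $\vz$. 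The Bellman equation then reads $v^*(\vs,\vz)=\max_{a\in\mathcal{A}_\vs}\big[e_a^\top\vz+Q_a(\vs)\big]$, a pointwise maximum of at most $K$ affine functions of $\vz$. Convexity and piecewise-linearity in $\vz$ are then immediate, and reading off the upper envelope yields exactly the claimed $\sum_j(e_j^\top\vz+Q_j(\vs))\mathds{1}_{\{\vz\in\mathcal{Z}_j(\vs)\}}$ form, with $\mathcal{Z}_j(\vs)$ the region where mode $j$ attains the maximum.

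Before reading off the envelope I would discharge existence and well-definedness, since $\bar v^*$ sits on both sides. Because $\mathcal{S}$ is finite, $r\in[0,1]$, and $\gamma<1$, I would run value iteration $v_{n+1}=\mathcal{T}v_n$ from $v_0\equiv0$ and prove by induction that every iterate has the max-affine form $v_n(\vs,\vz)=\max_{a\in\mathcal{A}_\vs}[e_a^\top\vz+Q_a^{(n)}(\vs)]$ with $Q_a^{(n)}(\vs)=\gamma\sum_{\vs'}\mathbb{P}(\vs'\mid\vs,a)\bar v_{n-1}(\vs')$; the step only requires that $\bar v_n(\vs)=\mathbb{E}_{\rvz}[v_n(\vs,\rvz)]$ be a finite scalar, which holds by boundedness (note $\bar v_n$ is the expectation of a maximum, not a maximum of expectations). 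The operator is a $\gamma$-contraction, so $v_n\to v^*$ uniformly; since $|\max_a f_a-\max_a g_a|\le\max_a|f_a-g_a|$, the convergence $Q_a^{(n)}\to Q_a$ forces $v_n\to\max_a[e_a^\top\vz+Q_a(\vs)]$ uniformly in $\vz$, so the limit retains the max-affine form. This simultaneously proves existence and the structure, and it handles the state-dependent feasible sets $\mathcal{A}_\vs$ cleanly, an infeasible mode simply contributing an empty region.

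For the partition's parameterization I would characterize $\mathcal{Z}_j(\vs)$ as the set where $e_j^\top\vz+Q_j(\vs)\ge e_i^\top\vz+Q_i(\vs)$ for all feasible $i$, equivalently $(e_j-e_i)^\top\vz\ge Q_i(\vs)-Q_j(\vs)$. The key observation is that every pairwise threshold is a difference of the $K-1$ reference thresholds taken relative to the random-guess mode $a_0$: setting $\delta_{0i}=Q_0(\vs)-Q_i(\vs)=\gamma(\mathbb{P}_{a_0}(\vs)-\mathbb{P}_{a_i}(\vs))\bar v^*$ gives $Q_i(\vs)-Q_j(\vs)=\delta_{0j}-\delta_{0i}$, so the whole family $\mathcal{P}_\vs$ is determined by $\vdelta(\vs)=[\delta_{0i}]_{i=1}^{K-1}$. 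Stacking the $K-1$ inequalities indexed by $i\ne j$ produces the matrix form $\mathcal{Z}_j(\vs)=\{\vz:\mM_j\vz\ge\mF_j\vdelta(\vs)\}$, where the rows of $\mM_j$ are the difference vectors $e_j-e_i\in\{0,\pm1\}^K$ and $\mF_j\in\{0,\pm1\}^{(K-1)\times(K-1)}$ realizes the map $\vdelta\mapsto(\delta_{0j}-\delta_{0i})_{i\ne j}$.

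The main obstacle I anticipate is not the envelope algebra but the self-consistency of $\bar v^*$: it appears inside its own defining expectation, so I must argue that the reduced fixed-point equation $\bar v^*(\vs)=\mathbb{E}_{\rvz}[\max_{a}(e_a^\top\rvz+\gamma\sum_{\vs'}\mathbb{P}(\vs'\mid\vs,a)\bar v^*(\vs'))]$ admits a unique solution in $\mathbb{R}^{|\mathcal{S}|}$, which follows because this integrate-out-$\vz$ operator is a $\gamma$-contraction in the sup-norm on the finite space $\mathcal{S}$. The remaining care is in passing the max-affine structure to the limit rather than merely at each finite stage, but the uniform bound $1/(1-\gamma)$ on all iterates makes this routine. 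Everything downstream, including the threshold-counting claim and the sign pattern of $\mM_j,\mF_j$, is then bookkeeping rather than genuine difficulty.
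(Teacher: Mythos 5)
Your proposal follows essentially the same route as the paper's proof in Appendix~\ref{ap:proof_pwl_vf}: substitute the affine reward $e_a^\top\vz$ into the Bellman optimality equation, use the i.i.d.\ assumption on $\rvz$ to collapse the continuation term to $\gamma\mathbb{P}_a^\top(\vs)\bar v^*$ (constant in $\vz$), read $v^*(\vs,\cdot)$ off as the upper envelope of $K$ affine functions, and reduce the pairwise comparison thresholds to the $K-1$ reference values $\delta_{0i}$ via $\delta_{ij}=\delta_{0j}-\delta_{0i}$. The only substantive difference is that you additionally verify existence and uniqueness of the self-referential quantity $\bar v^*$ through a value-iteration/contraction argument and pass the max-affine structure to the limit explicitly --- a point the paper's proof takes for granted; this is a welcome tightening rather than a different proof.
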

As direct consequence of Theorem~\ref{th:pwl_vf} we can state the following lemma.
\begin{lemma}[Optimal policy]
    The optimal policy $\pi^*$ for the one-shot formulation is
    \begin{align}
        \pi^*(\vs,\vz) &=\sum_{j=0}^{|\mathcal{A}_\vs|-1} a_j\mathds{1}_{\{\vz\in\mathcal{Z}_j(\vs)\}},\quad a_j\in\mathcal{A}_\vs,
    \end{align}
    where $\mathds{1}_{\{D\}}$ is the indicator function for event $D$.
\end{lemma}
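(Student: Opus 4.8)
The plan is to obtain the optimal policy as the greedy rule with respect to the optimal action-value function $q^*$, and then to read off the decision regions directly from the piece-wise linear structure established in Theorem~\ref{th:pwl_vf}. By the standard theory of discounted MDPs with bounded rewards, any policy that selects in each state an action attaining the maximum in the Bellman optimality equation is optimal; hence it suffices to characterize $\argmax_{a\in\mathcal{A}_\vs} q^*(\vs,\vz,a)$ and show it coincides with the stated indicator representation.

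First I would compute $q^*$ explicitly. Using the reward $r(\vs,\vz,a)=e_a^\top\vz$ from~\eqref{eq:oneshot_reward} together with the assumption that $\{\rvz_{t_n}\}$ are iid and independent of the discrete transition, the expectation over the next confidence vector decouples from the transition of $\vs$: since $\rvz'$ is independent of $(\vs,\vz,a)$ and $\mathbb{P}(\vs'\mid\vs,a)$ governs only $\vs'$, one obtains
\begin{align*}
    q^*(\vs,\vz,a) = e_a^\top\vz + \gamma\sum_{\vs'\in\mathcal{S}}\mathbb{P}(\vs'\mid\vs,a)\,\bar{v}^*(\vs'),
\end{align*}
with $\bar{v}^*(\vs)=\mathbb{E}_{\rvz}[v^*(\vs,\rvz)]$ as in Theorem~\ref{th:pwl_vf}. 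Thus, for fixed $\vs$, each map $\vz\mapsto q^*(\vs,\vz,a)$ is affine with slope $e_a$ and intercept equal to the discounted continuation value of $a$, and $v^*(\vs,\vz)=\max_{a\in\mathcal{A}_\vs} q^*(\vs,\vz,a)$ is their upper envelope.

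Next I would identify the decision regions. By the construction underlying Theorem~\ref{th:pwl_vf}, the sets $\{\mathcal{Z}_j(\vs)\}_j$ partition $\mathcal{Z}$ into precisely the regions on which the maximum of the affine maps above is attained by $a_j$, i.e. $\mathcal{Z}_j(\vs)=\{\vz:\, q^*(\vs,\vz,a_j)\ge q^*(\vs,\vz,a_i)\ \forall i\}$, which is exactly the polyhedral description $\{\vz:\mM_j\vz\ge\mF_j\vdelta(\vs)\}$ given there. Consequently the greedy action on $\mathcal{Z}_j(\vs)$ is $a_j$, and writing the greedy selection with indicators over this partition yields
\begin{align*}
    \pi^*(\vs,\vz) = \sum_{j=0}^{|\mathcal{A}_\vs|-1} a_j\,\mathds{1}_{\{\vz\in\mathcal{Z}_j(\vs)\}},
\end{align*}
as claimed. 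The one subtlety to address is ties on the boundaries between adjacent regions, where several actions attain the maximum; these boundaries lie on finitely many hyperplanes and hence form a null set with respect to the density $p_\rvz$, so any tie-breaking rule gives the same expected return, and taking $\{\mathcal{Z}_j(\vs)\}_j$ to be a genuine disjoint partition makes $\pi^*$ well defined almost everywhere and the representation exact. I expect no real difficulty beyond this bookkeeping: the heavy lifting—showing the regions are polyhedral and parameterized by the thresholds $\vdelta(\vs)$—is already done in Theorem~\ref{th:pwl_vf}, so the lemma is essentially the greedy rule induced by that structure.
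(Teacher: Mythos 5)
Your proposal is correct and follows essentially the same route as the paper: the lemma is stated there as a direct corollary of Theorem~\ref{th:pwl_vf}, whose proof already expresses $v^*(\vs,\vz)=\max_{a\in\mathcal{A}_\vs}q^*(\vs,\vz,a)$ as a sum of affine pieces over the partition $\{\mathcal{Z}_j(\vs)\}_j$ defined exactly by the greedy comparison $q^*(\vs,\vz,a_j)\geq q^*(\vs,\vz,a_i)$, so reading off the argmax gives the stated indicator form. Your additional remark on tie-breaking is sound and corresponds to the paper's footnote that ties are broken arbitrarily.
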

The previous lemmas characterize the structure of the optimal value function, establishing that the one-shot optimal policy requires $|\mathcal{S}|$ partitions of $\mathcal{Z}$, $\mathcal{P}_s$, each parameterized by $K-1$ values, say thresholds. This determines a finite structure, which is essential to design a value-iteration-based algorithm to compute $\varepsilon$-optimal policies. In fact, the algorithm can impose the optimal structure and iterate over a restricted set of policies, which contains the optimal ones. Such a structure simplifies when the $0$-th exit classifier is a random predictor which outputs a random label for the current input instance. In this case, the $0$-th entry of the confidence vector is constant, that is $z^{(0)}=1/|\mathcal{Y}|$, where $|\mathcal{Y}|$ is the number of classes. This simplifies the structure of the optimal policy, providing a simpler geometrical interpretation.

\begin{example}[\gls*{os}-\gls*{iaw} with Initial Random Predictor]
    Consider a \gls*{dnn} model implementing $K=3$ computing modes, where the $0$-th is a random predictor. The action space and the confidence vector are $\mathcal{A}=\{0,1,2\}$ and $\vz=[1/|\mathcal{Y}|, z^{(1)}, z^{(2)}]$, respectively. In each state $\vs=(b,h)$, the optimal value function is partitioned into $\{\mathcal{Z}_0(\vs), \mathcal{Z}_1(\vs), \mathcal{Z}_2(\vs)\}$, where $\vdelta(\vs)=[\delta_{01}(s), \delta_{02}(\vs)]$ and
    \begin{align}
        \begin{split}
            \mM_0&=\begin{bmatrix}
                1 & -1 & 0\\
                1 & 0 & -1
            \end{bmatrix},\ 
            \mM_1=\begin{bmatrix}
                -1 & 1 & 0\\
                0 & 1 & -1
            \end{bmatrix},\
            \mM_2=\begin{bmatrix}
                -1 & 0 & 1\\
                0 & -1 & 1
            \end{bmatrix},\\
            \mF_0&=\begin{bmatrix}
                -1 & 0\\
                0 & -1
            \end{bmatrix},\ 
            \mF_1=\begin{bmatrix}
                1 & 0\\
                1 & -1
            \end{bmatrix},\ 
            \mF_2=\begin{bmatrix}
                0 & 1\\
                -1 & 1
            \end{bmatrix}.
        \end{split}
    \end{align}
    In particular, the equations describing $\mathcal{Z}_0(\vs)$,
    \begin{align}
    z^{(1)}\leq \frac{1}{|\mathcal{Y}|} + \delta_{01}(\vs),\ \textrm{and } z^{(2)}\leq \frac{1}{|\mathcal{Y}|} + \delta_{02}(\vs)
    \end{align}
    defines a rectangular set in the $z^{(1)}$-$z^{(2)}$ plane, while for $\mathcal{Z}_1(\vs)$ and $\mathcal{Z}_2(\vs)$ we have
    \begin{align}
        \mathcal{Z}_1(\vs):
        &\begin{cases}
            z^{(1)}\geq \frac{1}{|\mathcal{Y}|} + \delta_{01}(\vs)\\
            z^{(1)}\geq z^{(2)} - \delta_{02}(\vs) +  \delta_{01}(\vs),
        \end{cases},\\
        \mathcal{Z}_2(\vs):
        &\begin{cases}
            z^{(2)}\geq \frac{1}{|\mathcal{Y}|} + \delta_{02}(\vs)\\
            z^{(1)}\leq z^{(2)} - \delta_{02}(\vs) +  \delta_{01}(\vs)
        \end{cases},
    \end{align}
    corresponding to a $45^{\circ}$ separating line in the $z^{(1)}$-$z^{(2)}$ plane. Figure~\ref{fig:opt_vf} visually represents the partition $\mathcal{P}_\vs$ in $z^{(1)}$-$z^{(2)}$. 
    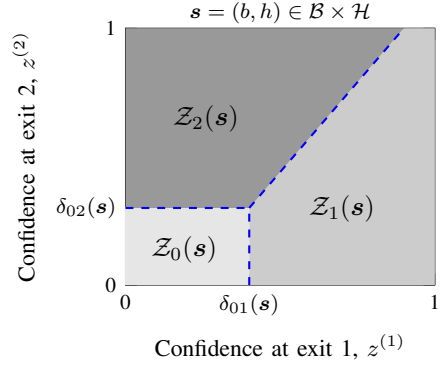
\begin{figure}
        \centering
            \begin{tikzpicture}
        \begin{axis}[
            title=\footnotesize{$\vs=(b,h)\in\mathcal{B}\times \mathcal{H}$},
            title style={at={(0.5,.92)}},
            ticklabel style={font=\footnotesize},
            xlabel= {\small Confidence at exit 1, $z^{(1)}$},
            ylabel={\small Confidence at exit 2, $z^{(2)}$},
            scale=0.6,
            xtick={0, 0.4, 1},
            xticklabels={0, $\delta_{01}(\vs)$, 1},
            ytick={0, 0.3, 1},
            yticklabels={0, $\delta_{02}(\vs)$, 1},
            ymin=0,
            ymax=1, 
            xmin=0, 
            xmax=1]
            \fill[gray!20] (0,0) -- (0.4,0.) -- (0.4,0.3) -- (0,0.3) -- cycle;
            \fill[gray!40] (0.4,0) -- (0.4,0.3) -- (0.9,1) -- (1,1) -- (1,0) -- cycle;
            \fill[gray!80] (0,0.3) -- (0.4,0.3) --(.9,1) -- (0,1) -- cycle;
            \addplot[mark=none, blue, dashed, thick] coordinates {(0.4,0.3) (0.9,1)};
            \addplot[mark=none, blue, dashed, thick] coordinates {(0.4,0) (0.4,0.3)};
            \addplot[mark=none, blue, dashed, thick] coordinates {(0.,0.3) (0.4,0.3)};
            \node[black] at (0.19,0.14) {$\mathcal{Z}_0(\vs)$};
            \node[black] at (0.7,0.3) {$\mathcal{Z}_1(\vs)$};
            \node[black] at (0.26,0.65) {$\mathcal{Z}_2(\vs)$};
        \end{axis}
    \end{tikzpicture}
        \caption{Partition $\mathcal{P}_\vs$ in the $z^{(1)}$-$z^{(2)}$ induced by the optimal value function with $K=3$ exit classifiers.}
        \label{fig:opt_vf}
    \end{figure}
    Therefore, in state $(\vs,\vz)$, the optimal policy $v^*$ selects action $a_j$ whenever $\vz\in\mathcal{Z}_j(\vs)$. 
\end{example}

In order to design an algorithm to find the optimal policy for the one shot controller, we restrict the search to policies with the structure described in Section~\ref{sec:oneshot_opt_policy}. Let
\begin{align*}
    \vDelta_{\mathbb{P}(\vs)}\triangleq [\mathbb{P}_{a_1}(\vs)-\mathbb{P}_{a_j}(\vs),\ j=1,\dots,K-1].
\end{align*}
Hence, $\vdelta^*(\vs)=\vDelta_{\mathbb{P}(\vs)}\bar{\vv}^*$, and we rewrite $\bar{v}^*(\vs) = \mathbb{E}_\rz[v^*(\vs,\rz)]$ as
\begin{align}
    \begin{split}
        \bar{v}^*(\vs) = \sum_{j=0}^{K-1}\mathbb{E}_{\rz}\Big[ \Big(e^\top_j\rz + \gamma \mathbb{P}_{a_j}(\vs)\bar{\vv}^*\Big)\sgn(\mM_j\rz-\mF_j(\vDelta_{\mathbb{P}(\vs)}\bar{\vv}^*))\Big],&
    \end{split}
\end{align}
noting that $\bar{v}^*(\vs)$ is the fixed point of the following optimality operator
\begin{align}
    \begin{split}
        (\mathcal{T}\bar{v})(\vs) = \sum_{j=0}^{K-1}\mathbb{E}_{\rz}\Big[ \Big(e^\top_j\rz + \gamma \mathbb{P}_{a_j}(\vs)\bar{\vv}\Big)\sgn(\mM_j\rz
        -\mF_j(\vDelta_{\mathbb{P}(\vs)}\bar{\vv}^*))\Big].&
    \end{split}
\end{align}

Let $\mathcal{T}^l$ be the composition of $\mathcal{T}$ with itself $l$ times. Since for discounted problems $\bar{v}(\vs) = \lim_{l\to\infty}(\mathcal{T}^l \bar{v})(\vs),\forall \vs$ holds \cite{bertsekas1996neuro}, \gls*{vi} can be used. However, applying $\mathcal{T}$ requires the controller to know the joint distribution $p_\rvz$, which is unknown. Therefore, at each iteration of \gls*{vi}, we approximate $\bar{v}(\vs)$ with the empirical-mean estimator $\hat{\bar{v}}(\vs)$ computed over an estimation set $\mathcal{D}_{\text{est}}=\{\vz_i\}_{i=1}^D$.

This leads to the definition of an approximate optimality operator $\hat{\mathcal{T}}$ as follows
\begin{align}
    \begin{split}
        (\hat{\mathcal{T}}\bar{v})(\vs) = \frac{1}{D}\sum_{i=1}^D\sum_{j=0}^{K-1}\Big(e^\top_j\vz_i + \gamma \mathbb{P}_{a_j}(\vs)\bar{\vv}\Big)\sgn(\mM_j\vz_n
        -\mF_j(\vDelta_{\mathbb{P}(\vs)}\bar{\vv}))\Big].&
    \end{split}
\end{align}

Algorithm \ref{alg:vi} described the $\epsilon$-approximate \gls*{vi} which results in an $\epsilon$-optimal $\bar{\vv}_\epsilon$ such that $||\bar{\vv}_\epsilon-\hat{\bar{\vv}}||_\infty\leq\epsilon$, whereby $\hat{\bar{\vv}}=\hat{\mathcal{T}}\hat{\bar{\vv}}$. Moreover, $\hat{\bar{\vv}}$ is an approximate solution of $\bar{\vv}=\mathcal{T}\bar{\vv}$, meaning that $\hat{\bar{\vv}}\approx \bar{\vv}^*$. Clearly, due to the strong law of large numbers, as the sample size $N\to\infty$, $\hat{\bar{\vv}}\to \bar{\vv}$ a.s., and an $\epsilon$-optimal policy can be found. Otherwise, for finite $N$, an $\epsilon$-sub-optimal policy will be obtained.

\begin{algorithm}[t]
    \caption{$\epsilon$-Approximate \gls*{vi}}\label{alg:vi}
    \begin{algorithmic}
        \Require: $l=0$, estimation set $\mathcal{D}_{\text{est}}$, $\bar{v}_0(\vs) = 0,\,\forall \vs$
        \Repeat 
        \State $\bar{v}_{l}(\vs) = (\hat{\mathcal{T}}\bar{v}_{l-1})(\vs),\ \forall \vs$
        \Until{$||\bar{\vv}_{l}-\bar{\vv}_{l-1}||_{\infty}\leq\epsilon$}
    \end{algorithmic}
\end{algorithm}


\subsection{Suboptimal Policy for the \gls*{inc}-\gls*{iaw}-\gls*{ee}}
The incremental approach partitions each slot into non-empty disjoint sub-slots, where the controller chooses a sub-action to decide whether to switch to \textit{idle mode}, or to \textit{proceed} with the computation of the next exit. Unlike the one-shot formulation where $z$ represents an uncontrollable state component, in the incremental scenario the evolution of $z$ can be directly affected by the action $\alpha$. Should the controller choose to proceed to the next exit, it will add further inference processing to the current input instance.
Since the input of each layer of the model is the output of the previous layer, the sequential nature exhibits a Markovian property, whereby the future state is solely dependent on the current state.
Therefore, given the state $(\vs,z)$ and the action $\alpha$, the next state $(\vs',z')$ is determined as follows: $\vs'$ is generated according to $\mathbb{P}(\vs'|\vs, \alpha)$ and the next confidence value $z'$ is generated according to the conditional probability $\mathbb{P}(z'|\vs, z,\alpha)$. Note that the latter is unknown and, for each $\vs$, it is a function of $z\in[0,1]$.
 We could construct a model to learn these conditional probabilities, and design a model-based controller. %
However, optimizing separately the learning and the control problem may face extra costs without providing proportional performance benefits to the whole system. Therefore, we use \gls*{dqn}~\cite{dqn} to optimize the control problem without the explicit need for the approximation of the conditional distributions of confidence values $\mathbb{P}(\cdot|\vs,z,\alpha)$.

\gls*{dqn} uses a \gls*{nn}, parameterized by $\theta$, to approximate the optimal action-value function for all possible actions within a given state, that is $q_\theta^*(\vx, a)\approx q^*(\vx, a)$. At each time step $t$ a replay buffer $R_{t}=\{e_{i}\}_{i=1}^t$ stores the agent's past experiences $e_{t}=(\vx_{t}, a_{t}, r(\vx_{t}, \alpha_{t}), \vx_{t+1})$ collected while interacting with the environment following an $\varepsilon$-greedy policy based on $q_{\theta_t}(\vx, a)$. During learning, mini-batches of past experiences are sampled uniformly at random from $R_{t}$, and $\theta_{t}$ is updated minimizing \cite{dqn}
\begin{align*}
    \begin{split}
         \mathbb{E}_{(\rvx,\alpha,\rvx')\sim R_{t}}\Big[\big( r(\rvx,\alpha) 
         +\gamma \max_{\alpha'} q_{\tilde{\theta}_{t}}(\rvx',\alpha') - q_{\theta_{t}}(\rvx,\alpha) \big)^2\Big]
    \end{split}
\end{align*}

through stochastic gradient descent. $\tilde{\theta}_{t}$ are the target network parameters which are periodically updated to mitigate correlations between the action and the target values.\\
Notably, the usage of a \gls*{dnn}-based approach to optimize the energy performance of another \gls*{dnn} may appear excessive. However, we use a lightweight fully-connected \gls*{dnn} to approximate the action-value function, described in Section \ref{sec:dqn_target}.

\subsection{Optimal Policy for the \gls*{mms} controller}
We characterize the optimal policy of \gls*{mms} controller.
\begin{theorem}[Optimality of monotone policies for \gls*{mms}]\label{th:mms_optimal_policy}
    For the \gls*{mms} problem, there exists a monotone non-decreasing optimal policy in the \gls*{es} level $b\in\mathcal{B}$ of the form
    \begin{align}\label{eq:opt_mms_policy}
        \pi_{\text{\gls*{mms}}}^{*}(b,h) = \min\{ a': a'\in\argmax_{a\in\mathcal{A}(b)} q_h^*(b,a)  \},
    \end{align}
    where $q^*_h(b,a)$ is the optimal Q-function for every $h\in\mathcal{H}$ as in \eqref{eq:q_function}.
\end{theorem}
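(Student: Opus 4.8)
The plan is to prove the statement by a monotone comparative statics (Topkis) argument. Since the rewards lie in $[0,1]$, the discount factor satisfies $\gamma\in(0,1)$, and the state-action space is finite, the Bellman operator is a contraction, so $v^*$ and $q_h^*$ exist uniquely and value iteration converges; this lets me work with the value-iteration iterates. The target is to show that the optimal $Q$-function $q_h^*(b,a)$ is supermodular (has increasing differences) in $(b,a)$ for each fixed environment state $h$, and that the feasible set $\mathcal{A}(b)=\{a:u(a)\le b\}$ is ascending in $b$ (immediate, since $u$ is non-decreasing in the mode index). Topkis' monotonicity theorem then gives that $\argmax_{a\in\mathcal{A}(b)}q_h^*(b,a)$ is non-decreasing in $b$ in the strong set order, so its smallest selection $\min\{a':a'\in\argmax\}$ is a non-decreasing function of $b$, which is exactly the policy in \eqref{eq:opt_mms_policy}. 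The whole difficulty is thus pushed into the increasing-differences property, which I would derive from concavity of the value function in the \gls*{es} level.

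The useful reformulation is the post-decision state $c\triangleq b-u(a)$. Feasibility forces $c\ge0$, so the lower clamp in \eqref{eq:evolution_battery} never binds and the recursion simplifies to $\rb_{t+1}=\min\{c+\re^H_t,b_{\text{max}}\}$, a concave non-decreasing map of $c$. Writing $\tilde v_h(c)\triangleq\mathbb{E}[v^*(\min\{c+\re^H,b_{\text{max}}\},h')\mid h]$, we get $q_h^*(b,a)=\rho^{(a)}+\gamma\,\tilde v_h(b-u(a))$. First I would show by induction on the iterates $v_l$ (from $v_0\equiv0$) that $v_l(\cdot,h)$ is non-decreasing and concave in $b$ for every $h$, and that these are preserved by the Bellman operator; as the set of bounded non-decreasing concave functions is closed under pointwise limits, $v^*$ inherits them. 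Monotonicity is routine: a larger $b$ enlarges $\mathcal{A}(b)$ and stochastically improves the next state. Concavity is the delicate step: because the clamp is one-sided, $c\mapsto\min\{c+e,b_{\text{max}}\}$ is concave non-decreasing, so $\tilde v_h^{(l)}$ is concave whenever $v_l$ is. The remaining obstacle is that $v_{l+1}(b,h)=\max_{a\in\mathcal{A}(b)}[\rho^{(a)}+\gamma\tilde v_h^{(l)}(b-u(a))]$ is a pointwise maximum, which in general does not preserve concavity. I would handle this by viewing the bracket as a surrogate $G_h(b,a)$ that is jointly concave on the $(b,a)$ lattice, using that $b-u(a)$ is affine in $b$ and $\tilde v_h^{(l)}$ is concave non-decreasing, and then invoking the fact that partial maximization of a jointly concave function over a convex feasible set returns a function concave in the retained variable $b$.

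With $\tilde v_h$ concave in hand, the increasing-differences property is a short computation. For $a_2>a_1$, set $d\triangleq u(a_2)-u(a_1)\ge0$, and for $b_1\le b_2$ set $x_i\triangleq b_i-u(a_1)$, so $x_1\le x_2$. The cross difference of $q_h^*$ then equals $\gamma\big\{[\tilde v_h(x_1)-\tilde v_h(x_1-d)]-[\tilde v_h(x_2)-\tilde v_h(x_2-d)]\big\}$, which is non-negative precisely because concavity makes the increment $\tilde v_h(x)-\tilde v_h(x-d)$ non-increasing in $x$; the reward term $\rho^{(a)}$ contributes nothing since it is independent of $b$. This establishes supermodularity of $q_h^*$ in $(b,a)$.

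Combining supermodularity with the ascending feasible sets $\mathcal{A}(b)$ closes the Topkis argument and delivers the monotone smallest-maximizer selection, uniformly in $h$. The main obstacle is the concavity-under-maximization step; the one-sided clamp (guaranteed by the feasibility constraint $u(a)\le b$) and the post-decision-state change of variable are exactly what make it tractable, since they turn the battery map into a concave non-decreasing map and let me reduce the preservation question to partial maximization of a jointly concave surrogate. As an alternative to the concavity route, one can instead verify directly that the additively coupled transition kernel is stochastically monotone and superadditive and appeal to the standard result that the Bellman operator then preserves supermodularity of $q^*$; I would present the concavity argument as the primary path because it is self-contained for this battery structure.
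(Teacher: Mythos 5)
Your architecture differs from the paper's at the key lemma. The paper proves the increasing-differences (superadditivity) property of $q_h$ directly by induction through value iteration, propagating the inequality $q_h^{\ell}(b_2,a_2)-q_h^{\ell}(b_2,a_1)\ge q_h^{\ell}(b_1,a_2)-q_h^{\ell}(b_1,a_1)$ using the constant-differences reward and the battery map $\beta(b,a,e)$, and then invokes the standard superadditivity-implies-monotone-smallest-maximizer result from Puterman. You instead induct on concavity and monotonicity of $v^{\ell}(\cdot,h)$ in $b$ and deduce increasing differences from concavity through the post-decision state $c=b-u(a)$. Your post-decision reformulation, the observation that feasibility deactivates the lower clamp in \eqref{eq:evolution_battery}, and the final cross-difference computation are all correct, and your Topkis step is equivalent to the paper's appeal to Puterman.

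The gap is the concavity lemma itself, which is false under the paper's stated assumptions. The paper assumes only that $u$ is non-decreasing in the mode index and that the rewards $\rho^{(a)}$ lie in $[0,1]$; it does not assume $\rho^{(a)}$ concave in $a$, nor $u$ affine or convex in $a$, and all of these are needed for your joint-concavity argument: concavity of $\tilde v_h(b-u(a))$ in the pair $(b,a)$ requires the inner map to be concave in $(b,a)$ (affinity in $b$ alone is not enough), the additive term $\rho^{(a)}$ must itself be concave in $a$, and the feasible set $\{(b,a):u(a)\le b\}$ must be convex. A concrete counterexample: take $K=2$, $u(0)=0$, $u(1)=2$, $\rho^{(0)}=0$, $\rho^{(1)}=1$, $b_{\text{max}}=2$, and $\re^H\equiv 0$. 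Then $v^*(0)=v^*(1)=0$ and $v^*(2)=1$, so $v^*$ is strictly convex in $b$ (the optimal policy is still monotone, but your lemma fails). Even with unit costs $u(k)=k$, convex rewards such as $\rho=(0,\,0.01,\,1)$ yield $v^*=(0,\,0.01,\,1)$, again convex. A secondary issue is that ``partial maximization of a jointly concave function over a convex set is concave'' is a continuous convex-analysis fact; on the integer battery lattice you would need its discrete analogue. To be fair, the paper's own induction is terse at exactly the point you flagged --- passing from superadditivity of $q_h^{\ell}$ to the required inequality on $v_h^{\ell}$ evaluated at the four points $\beta(b_i,a_j,e)$ also needs a concavity-type property of $v^{\ell}$ that is not established --- but your proposed repair imports hypotheses the theorem does not grant, so as written the proof does not go through.
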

Showing that the optimal policy for the \gls*{mms} is monotone in the \gls*{es} level has significant practical implications in developing efficient algorithms. Specifically, because the optimal solution reduces to identifying $K$ thresholds on the \gls*{es} level, a lookup table can be effectively stored on a resource-constrained device.

\section{Experimental Setup}\label{sec:exp_setup}
To validate our theoretical findings, we conducted simulation-based experiments. This section provides a comprehensive description of the architecture of \gls*{dqn} and the multi-exit \gls*{dnn}  employed, including a detailed analysis of per-layer \glspl*{flop}, which underpins our rationale for implementing early exits within the architecture. Furthermore, we elaborate on the training methodology and calibration procedures employed, alongside the model and parameters used for the energy provision framework.

\subsection{Dataset}
In our experiments, we use a multi-exit \gls*{dnn} for image classification on Tiny Imagenet \cite{le2015tiny}.
We partition the dataset into \(\mathcal{D}_{\text{train}}\), \(\mathcal{D}_{\text{cali}}\), \(\mathcal{D}_{\text{est}}\) and \(\mathcal{D}_{\text{test}}\) as follows: \(D_{\text{train}}\) receives 70\% of the data for the \gls*{dnn} training processes, \(\mathcal{D}_{\text{cali}}\) is allocated 10\% for calibration tasks, \(\mathcal{D}_{\text{est}}\) is assigned 10\% to empirically estimated $\hat{v}$ and learn the optimal policy, and the testing partition, \(\mathcal{D}_{\text{test}}\), receives the remaining 10\% of the data.

\subsection{Multi-Exit EfficientNet: Design, Training and Calibration}\label{sec:result_ee_design}
We design multi-exit EfficientNet architectures based on EfficientNet-(B0-B7) models \cite{Efficientnet}. The exit classifier, attached at the end of each stage, is devised by replicating the structure of the final classifier, while adapting the input feature map size to match the output size of each stage. Detailed specifications of these exit classifiers are provided in Table \ref{tab:exit_architecture}.

For each EfficientNet-B$m$, we construct seven sub-networks $f_i(\cdot; \theta^{(m)}),\ i=0,\dots,6$, each formed by the composition of the first $i$-th stages and the exit classifier. We measure the computational cost, in terms of \glspl*{flop} for $f_i(\cdot; \theta^{(m)})$ to process an input instance from the Tiny ImageNet dataset. Figure~\ref{fig:efficientnet_flops} illustrates the computational cost in terms of \glspl*{flop} for multi-exit Efficientnet-B0 through Efficientnet-B4 relative to the stage index after which the exit classifier is attached. Intuitively, the increasing complexity from EfficientNet-B0 to EfficientNet-B4 is a consequence of compound scaling \cite{Efficientnet}, which also results in higher \glspl*{flop}. Interestingly, for a given model $m$, the \glspl*{flop} do not always increase with the exit index $i$: due to the decreasing input resolution of each stage with $i$. At earlier stages, the exit classifier has to process higher input resolution, which offsets the increase in channels and the number of layers per stage. For instance, for EfficientNet-B0 through EfficientNet-B2, attaching an exit classifier at the first stage is impractical because the resulting sub-network is shallower and requires more \glspl*{flop} than its subsequent stage.

In this study, we employ EfficientNet-B2 as the backbone model, augmented with an initial energy-free random predictor. Additionally, we introduce two early exits after the $3$rd and $5$th stages, resulting in a 4 multi-exit neural network. This selection is informed by the observation that the \glspl*{flop} required by these two early exits align with those of EfficientNet-B0 and EfficientNet-B1, as indicated by the dashed horizontal lines in Figure~\ref{fig:efficientnet_flops}. Moreover, the cost in terms of \glspl*{flop} of the two early exits and the final one is approximately linear.
To train the multi-exit EfficientNet on Tiny ImageNet \cite{le2015tiny}, we formulate a joint optimization problem by aggregating the loss functions of the exit branches into a unified objective \cite{teerapittayanon2016branchynet}. We adopt a similar training process as in \cite{Efficientnet}, using Adam optimizer \cite{kingma2014adam} with a learning rate of $1$e-3, weight decay $1$e-5 and momentum $0.9$. The test set accuracies obtained by each exit classifier are $0.53$, $0.69$, $0.83$, respectively. We perform temperature scaling calibration \cite{guo2017calibration} to assure that the exit confidence of the classifiers is a representative estimate of the true likelihood. 


\begin{table}[]
    \centering
    \begin{tabular}{llcc}
    \toprule
        Network  & Operator & Channels\\
        \midrule
        \multirow{3}{4em}{Exit Classifier} & MBConv4, k3x3 & 304 \\
        & Conv1x1 & 608 \\
        & BN \& SiLU \& Pooling \& FC & 200 \\
        \midrule
        \multirow{3}{5em}{Deep Q-Network} & Fully Connected & 64 \\
        & Fully Connected & 64 \\
        & Fully Connected & 2 \\
    \end{tabular}
    \caption{Architecture of the exit classifier and \gls*{dqn}.}
    \label{tab:exit_architecture}
\end{table}


\begin{figure}
    \centering
     \usetikzlibrary{matrix}
\begin{tikzpicture}[remember picture]
    \begin{axis}[
        width=8.5cm,
        height=7.5cm,
        xlabel={Exit Anchors},
        ylabel={G\glspl*{flop}},
        xmin=-0.3,
        xmax=6.3,
        legend columns=5, 
    ]

        \coordinate (insetPosition) at (rel axis cs:0.01,0.95);
        \coordinate (legend) at (axis description cs:0.5,1.0);
    
        \addplot[Violet, thick, mark=*, mark size=1.5pt] table [x=x, y=efficientnet_b0, col sep=comma] {figures/results/flops/flops.csv};
        \addplot[mark=none, Violet, dashed, domain=0:6] {0.400635392};

        \addplot[Plum, thick, mark=*, mark size=1.5pt] table [x=x, y=efficientnet_b1, col sep=comma] {figures/results/flops/flops.csv};
        \addplot[mark=none, Plum, dashed, domain=0:6] {0.711708416};

        \addplot[NavyBlue, thick, mark=*, mark size=1.5pt] table [x=x, y=efficientnet_b2, col sep=comma] {figures/results/flops/flops.csv};
        \addplot[mark=none, NavyBlue, dashed, domain=0:6] {1.12502256};

        \addplot[CornflowerBlue, thick, mark=*, mark size=1.5pt] table [x=x, y=efficientnet_b3, col sep=comma] {figures/results/flops/flops.csv};
        \addplot[mark=none, CornflowerBlue, dashed, domain=0:6] {1.882674768};

        \addplot[Aquamarine, thick, mark=*, mark size=1.5pt] table [x=x, y=efficientnet_b4, col sep=comma] {figures/results/flops/flops.csv};
        \addplot[mark=none, Aquamarine, dashed, domain=3:6] {4.5110616};

    \end{axis}

    \begin{axis}[
        at={(insetPosition)},
        anchor={outer north west},
        x=0.4cm,
        y=0.1cm,
        footnotesize,
        legend columns=7,
        legend style={at={(0.,1.)}, anchor=north west}
        ]
        
        \addplot[Violet, thick, mark=*, mark size=0.6pt] table [x=x, y=efficientnet_b0, col sep=comma] {figures/results/flops/flops.csv};
        \label{plot:line0}

        \addplot[Plum, thick, mark=*, mark size=0.6pt] table [x=x, y=efficientnet_b1, col sep=comma] {figures/results/flops/flops.csv};
        \label{plot:line11}

        \addplot[NavyBlue, thick, mark=*, mark size=0.6pt] table [x=x, y=efficientnet_b2, col sep=comma] {figures/results/flops/flops.csv};
        \label{plot:line2}

        \addplot[CornflowerBlue, thick, mark=*, mark size=0.6pt] table [x=x, y=efficientnet_b3, col sep=comma] {figures/results/flops/flops.csv};
        \label{plot:line3}

        \addplot[Aquamarine, thick, mark=*, mark size=0.6pt] table [x=x, y=efficientnet_b4, col sep=comma] {figures/results/flops/flops.csv};
        \label{plot:line4}

        \addplot[Green, thick, mark=*, mark size=0.6pt] table [x=x, y=efficientnet_b5, col sep=comma] {figures/results/flops/flops.csv};
        \label{plot:line5}

        \addplot[LimeGreen, thick, mark=*, mark size=0.6pt] table [x=x, y=efficientnet_b6, col sep=comma] {figures/results/flops/flops.csv};
        \label{plot:line6}
        
    \end{axis}

    \matrix [
            matrix of nodes,
            anchor=south,
        ] at (legend) {
            \ref*{plot:line0} \footnotesize B0 & \footnotesize \ref*{plot:line11} B1 & \footnotesize \ref*{plot:line2} B2 & \footnotesize  \ref*{plot:line3} B3 & \footnotesize  \ref*{plot:line4} B4 & \footnotesize  \ref*{plot:line5} B5 & \footnotesize  \ref*{plot:line6} B6 \\
        };
\end{tikzpicture}
    \caption{\glspl*{flop} required to process an input instance for each sub-network $f_i(\cdot;\theta^{(i)})$, with $i$ being the exit anchor, of the corresponding EfficientNet model. The inset picture shows the \glspl*{flop} for the all EfficientNet models (from B0 to B6).
    }
    \label{fig:efficientnet_flops}
\end{figure}
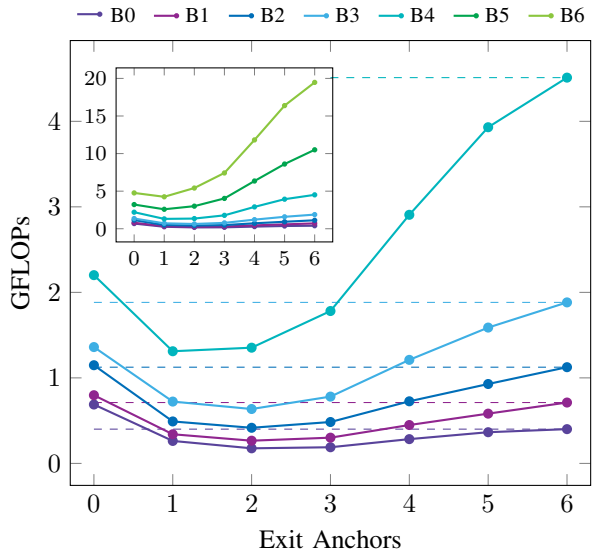

\begin{table}[t]
    \centering
    \begin{tabular}{ccccc}
    \toprule
        $p^G_{\re^H}$ & $p^B_{\re^H}$ &\multicolumn{1}{p{1.5cm}}{\centering Energy Rate $\mu$} & \multicolumn{1}{p{2.cm}}{\centering Calibrated Accuracy \%} & \multicolumn{1}{p{2.35cm}}{\centering Uncalibrated Accuracy \%} \\
       \midrule
        0.2 &  0.1  & 0.54 & \textbf{39.4} & 37.4 \\
        0.4 &  0.2  & 1.11 & \textbf{57.6} & 56.0 \\
        0.7 &  0.35 & 1.92 & \textbf{75.7} & 72.1 \\
        0.9 &  0.55 & 2.52  & \textbf{80.6} & 78.2 \\
        1   &  0.75 & 2.88 & \textbf{81.4} & 80.2 \\
        1   &  1    & 3.00   & \textbf{83.1} & 80.8 \\
    \end{tabular}
    \caption{The effect of calibration on the accuracy of OS-IAw controller. 
    We set $b_{\text{max}}=5$, $p_G=0.9$ and $p_B=0.5$.
    }
    \label{tab:cali}
\end{table}

\subsection{Deep Q-Network}\label{sec:dqn_target}
For the implementation of the incremental policy, the deep Q-network is designed as a compact fully connected \gls*{dnn} with two hidden layers of $64$ neurons each. The total number of \glspl*{flop} is $6.6$k, which represents only $0.0017\%$ of the computational complexity of EfficientNet-B0. The default hyperparameters are adopted from \cite{dqn}, except for the optimizer and learning rate  which are sourced from \cite{stable-baselines3}.

\subsection{Energy Harvesting (EH)}
As outlined in Section~\ref{sec:energy_provision_model}, we consider, without any loss of generality, a Markov chain with two environment states: ($G$)ood, and ($B$)ad, i.e., $\mathcal{H}=\{G,B\}$. These states signify favorable and unfavorable conditions, respectively. For instance, in the context of solar energy, the states could represent diurnal cycles of day and night or meteorological variations like sunny and cloudy conditions. Notice that the theoretical results derived in Section~\ref{sec:sys_model} hold for a generic Markov chain. Hence, a real \gls*{eh} source could be modeled better with additional environmental states. Nevertheless, it would also increase the complexity of the analysis unnecessarily.

For brevity, we define the transition probabilities as $p_\rh^{G} \triangleq p^{G,G}_\rh$ and $p^B_\rh \triangleq  p^{B,B}_\rh$, $\forall t$.
Similarly, the energy units harvested over a time slot are $\re_t^H\in\{0,1\}$ with $p^G_{\re^H}\triangleq p^G_{\re^H}(1)$ and $p^B_{\re^H}\triangleq p^B_{\re^H}(1)$, $\forall t$.
The incoming energy rate generated by the \gls*{eh} process is defined as
\begin{align}\label{eq:e_rate}
    \mu \triangleq p^\infty_G p^G_{\re^H} + p^\infty_B p^B_{\re^H},
\end{align}
where $ p^\infty_h$, $h\in\{G,B\}$, is the limiting distribution of the Markov chain $\{\rh_t\}$.


\section{Experimental Results}\label{sec:exp_results}
This section provides an extensive discussion of our experimental results. Section~\ref{sec:cali_vs_uncali_results} investigates the impact of calibration on the decision-making performance under different energy constraints. In Section~\ref{sec:policy_comparison}, we numerically derive and compare the policies for the controllers under study (listed in Table~\ref{tab:methods_studied}). Specifically, we utilize Algorithm~\ref{alg:vi} to numerically obtain the \(\epsilon\)-suboptimal policy for the \gls*{os}-\gls*{iaw} oracle controller. For comparison, we also derive a model-free\footnote{By model-free we mean that the controller does not require the knowledge of the transition probabilities of the environment.} oracle controller using \gls*{dqn}, referred to as \gls*{os}-\gls*{iaw}-EE-oracle-\gls*{dqn}.
The \gls*{mms} policy, which is \gls*{os}-\gls*{iag}, is computed using \gls*{pi}. To assess the advantages of \glsdesc*{inc} controllers, we employ \gls*{pi} to compute an 
\(\epsilon\)-optimal policy for the \gls*{inc}-\gls*{iag}-\gls*{ee}.
Ultimately, \gls*{dqn} is used to derive the \gls*{inc}-\gls*{iaw}-\gls*{ee}-\gls*{dqn} policy. In Section~\ref{sec:controller_comparison}, we conduct an extensive set of simulations to evaluate the long-term average accuracy (non-discounted) of the described models under different \gls*{es} and \gls*{eh} conditions. This extensive testing enables us to compare the performance of our multi-exit \gls*{dnn} architectures across a range of operational scenarios, providing a robust assessment of their effectiveness and efficiency.

\subsection{Calibrated vs Uncalibrated Oracle}\label{sec:cali_vs_uncali_results}
We study the effect of calibration on decision-making performance of the oracle controller, i.e., \gls*{os}-\gls*{iaw}. We select some representative values for the harvested energy \gls*{pmf}, where each corresponds to a different energy rate, therefore to a certain degree of energy constraint. We compute an \(\epsilon\)-optimal policy through Algorithm~\ref{alg:vi} using an uncalibrated and a post-training calibrated \gls*{dnn} model. We measure the performance by the average number of correctly classified input samples in \( \mathcal{D}_{\text{test}}\), computed for $200$ episodes of length $100$ steps. Table~\ref{tab:cali} reports the parameters selected and shows the dominance of the calibrated \gls*{dnn} over its uncalibrated counterpart, with long-term average accuracy improvements varying between $1\%$ and $3.6\%$. For this reason, in the remaining simulations we use calibrated \glspl*{dnn}.

\begin{figure*}[ht]
    \centering
    \input{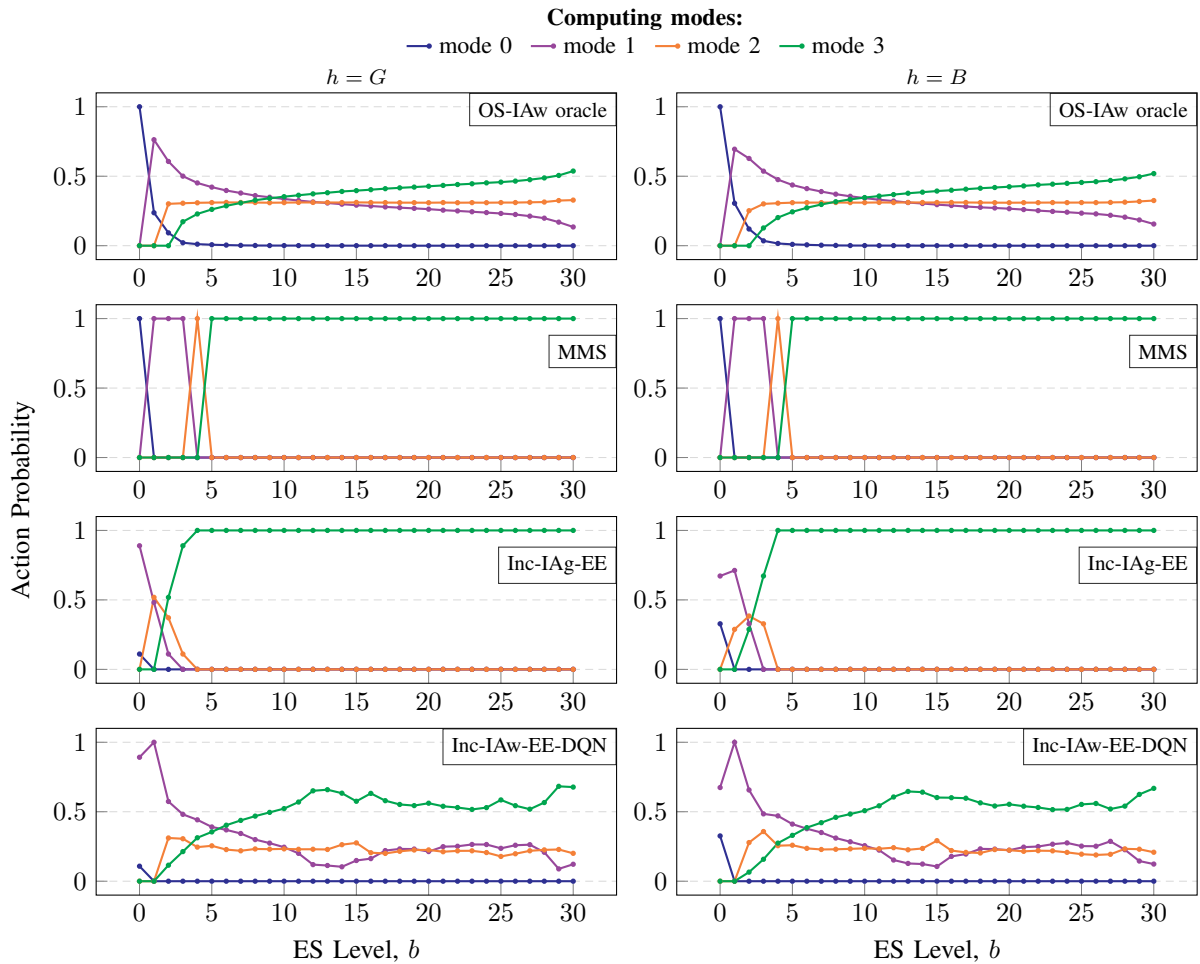}
    \caption{Numerical policies expressed in terms of probability over the actions (y-axis) conditioned on the \gls*{es} level (x-axis) and \gls*{eh} state (columns). Numerical values are computed for each controller with the appropriate algorithm using the following parameter setting: for $b_{\text{max}}=30$, $p^B_{\rh}=0.5$, $p^G_{\rh}=0.9$, $p^G_{\re^H}=0.8$, $p^B_{\re^H}=0$, $\gamma=0.9$.}
    \label{fig:policies}
\end{figure*}

\subsection{Policy Comparison}\label{sec:policy_comparison}
We compare the numerical policies obtained for \gls*{os}-\gls*{iaw}-oracle, \gls*{mms}, \gls*{inc}-\gls*{iag}-\gls*{ee} and \gls*{inc}-\gls*{iaw}-\gls*{ee} (\gls*{dqn}), using a discount factor of $\gamma=0.9$, $b_{\text{max}}=30$, $p^B_{\rh}=0.5$, $p^G_{\rh}=0.9$, $p^G_{\re^H}=0.8$, and $p^B_{\re^H}=0$. As discussed in Section~\ref{sec:result_ee_design}, we adopt a linearly increasing cost with the computation complexity. Precisely, the energy cost of the $k$-th computing mode is $u(k)=k$, $k=0, 1, 2, 3$, where $k=0$ refers to the initial random predictor.

All the policies we consider are deterministic, 
but they differ in their operational granularity. In fact, within the interval $[t_n, t_{n+1})$, the \gls*{os} controller takes a single action to decide a computing mode while the \gls*{inc} controller takes $T-1$ sub-actions to sequentially decide the optimal computing mode. Therefore, to compare the controllers' policies, we compute the fraction of input samples being processed at the $k$-th computing mode $\eta_k(b,h)$ when $\rb_{t_n}=b$ and $\rh_{t_n}=h$ at the beginning of each slot $[t_n, t_{n+1})$, $\forall n$. This can be interpreted as the empirical probability of selecting computing mode $k$ in $(b,h)$.\\
\indent In a given state $(b,h)$ the \gls*{mms} selection does not have any additional randomness, hence $\eta_k(b,h)=1$ iff $k=\pi^*_{\text{\gls*{mms}}}(b,h)$. Conversely, the \gls*{os}-\gls*{iaw} (oracle) inherits the randomness from the input samples. Thus, we compute $\eta_k(b,h)$ as the fraction of samples in $ \mathcal{D}_{\text{test}}$ being processed at the $k$-th mode when the controller follows $\pi^*$ in $(b,h)$, that is
\begin{align}
    \eta_k(b,h) = \frac{1}{\lvert \mathcal{D}_{\text{test}} \rvert}\sum_{i=1}^{\lvert  \mathcal{D}_{\text{test}} \rvert} \mathds{1}_{\{\vz_i\in\mathcal{Z}_k(b,h)\}}.
\end{align}
For the \gls*{inc}-\gls*{iag}-\gls*{ee} controller, compared to the \gls*{mms} one, the randomness stems from the intermediate energy arrivals which determine the evolution of $\rb_t$ and $\rh_t$. So, we have
\begin{align*}
    \eta_k(b,h) = \mathbb{P}\Big( \sum_{\tau=0}^{T-1}\pi^*(\rb_{t_n+\tau},\rh_{t_n+\tau})=k\mid \rb_{t_n}=b, \rh_{t_n}=h \Big).
\end{align*}
A closed-form derivation is provided in Appendix~\ref{ap:exit_prob_inc}. Finally, the \gls*{inc}-\gls*{iaw}-\gls*{ee} (\gls*{dqn}) controller is instance-aware and incremental; hence, it is affected by both the randomness of input instances and intermediate energy arrivals. To compute $\eta_k(b,h)$, we sample a sub-action trajectory $\Psi(b,h,z)=\{\alpha_\tau=\pi^*(\rb_{\tau},\rh_{\tau}, \rz^{(\tau) })\}_{\tau=0}^{T-1}$ such that $\rb_0=b$, $\rh_0=h$ and $\rz_0=z$. 
Hence,
\begin{equation}
    \eta_k(b,h) = \frac{1}{\lvert  \mathcal{D}_{\text{test}} \rvert}\sum_{i=1}^{\lvert  \mathcal{D}_{\text{test}} \rvert} \mathds{1}_{\big\{\sum_{\alpha\in\Psi(b,h,z_i)}\alpha\,=\,k\big\}}.
\end{equation}

Figure~\ref{fig:policies} shows the comparison in terms of computing mode (exit or model selection) probabilities for the policies of the four controllers mentioned above. %
The general trend indicates that the likelihood of selecting earlier exits (less energy-demanding models) increases as the \gls*{es} level decreases. Conversely, deeper exits (more energy-demanding models) are preferred when the \gls*{es} level is high. Moreover, we observe slightly less conservative policies when the computing mode is selected in (G)ood rather than (B)ad harvesting conditions. Such discrepancy is more noticeable for incremental controllers, where the action selection is optimized over a finer time granularity, thus more sensitive to modest energy variations.

\subsubsection{Comparing One-shot and Incremental Controllers}
Incremental approaches (\gls*{inc}-\gls*{iag}-\gls*{ee}, \gls*{inc}-\gls*{iaw}-\gls*{ee}-DQN) are generally less conservative at low \gls*{es} levels compared to one-shot methods (\gls*{os}-\gls*{iaw}-oracle, \gls*{mms}).
For example, when $b=0$, the \gls*{mms} controller  is restricted to selecting mode 0 (cost-free random predictor) as the \gls*{es} is depleted. Instead, under the same conditions, the \gls*{inc}-\gls*{iag}-\gls*{ee} controller adopts a mixed strategy combining modes 0 and 1, despite $u(1)\geq 0$. Analogously, when $b=1$, the \gls*{inc}-\gls*{iag}-\gls*{ee} controller operates by combining modes 1 and 2, in contrast to the \gls*{mms} which conservatively opts for mode 1. In a similar manner, the \gls*{inc}-\gls*{iaw}-\gls*{ee}-\gls*{dqn} policy enables the selection of mode 1 even when at the beginning of the decision epoch the \gls*{es} is depleted. This flexibility is absent in the \gls*{os}-\gls*{iaw}-oracle, where the cost-free random predictor remains the only feasible option.

Generally, one-shot controllers lack foresight of future energy realizations and thus optimizes on a time-averaged fashion within the slot. Conversely, incremental controllers operate at a finer granularity, and have more detailed information about intermediate \gls*{eh} conditions to take more informed decisions. In fact, an \gls*{inc} controller waits for optimal \gls*{eh} conditions within the decision epoch duration and opportunistically selects later exits, potentially achieving better performance.
\subsubsection{Comparing Instance Agnostic and Instance Aware Controllers}
Instance-aware (\gls*{os}-\gls*{iaw}-oracle, \gls*{inc}-\gls*{iaw}-\gls*{ee}-\gls*{dqn}) controllers exhibit a more gradual transition between modes than their instance-agnostic counterparts (\gls*{mms}, \gls*{inc}-\gls*{iag}-\gls*{ee}). For instance, although, \gls*{os}-\gls*{iaw}-oracle is model-based and \gls*{inc}-\gls*{iaw}-\gls*{ee}-\gls*{dqn} is model-free with more noisy mode probabilities, they both show a similar behaviour, starting with high probability for modes 0 and 1, and gradually shifting towards modes 2 and 3 as $b$ increases. On the other hand, the \gls*{mms} drastically switches between modes. Interestingly, at higher \gls*{es} levels ($b\geq 5$) the most expensive mode (mode 3) becomes dominant for both instance-agnostic methods. This is because \gls*{iag} controllers, lacking per-instance confidence, relies on average confidence, pushing for costly modes even when a less energy-intensive mode could yield good performance. This explains the \gls*{iaw} controller's strategy of processing only a fraction of input samples using the most energy-intensive mode when the \gls*{es} is high, in favour of less energy-hungry modes with sufficient accuracy.

\begin{figure*}[ht]
    \centering
    \input{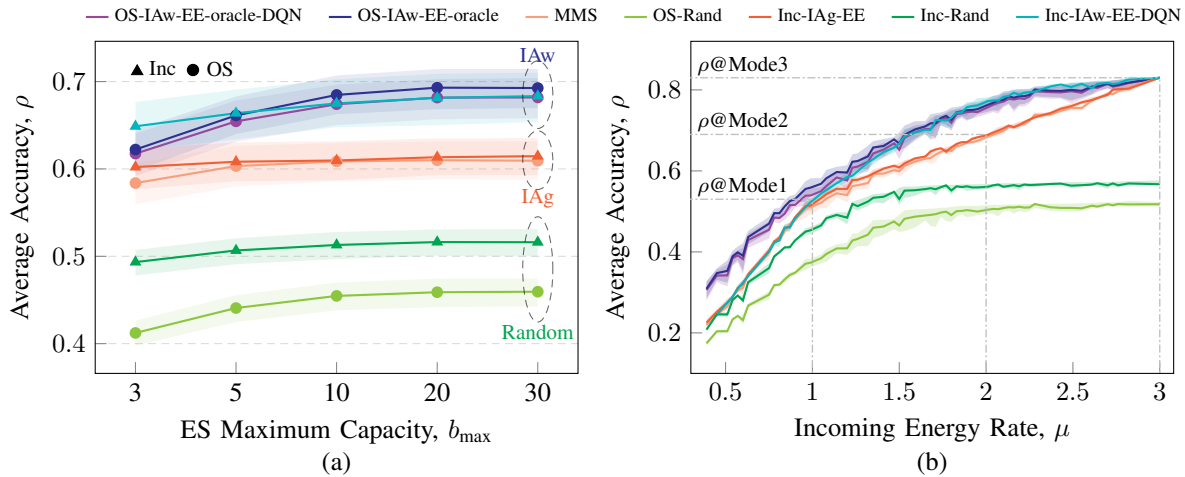}
    \caption{(a) Average accuracy $\rho_\pi$ computed as in \eqref{eq:accuracy} for each controller policy $\pi$ as a function of the \gls*{es} capacity $b_{\text{max}}$. Triangle markers identify \glsdesc*{inc} methods, while circle markers the \glsdesc*{os} ones. (b) Average accuracy $\rho_\pi$ as a function of the incoming energy rate $\mu$ computed as in \eqref{eq:e_rate}.}
    \label{fig:controller_comparison}
\end{figure*}

\subsection{Performance Comparison}\label{sec:controller_comparison}

This section provides a comprehensive comparison of various policies for \gls*{ee} and \gls*{mms} under different energy conditions. Precisely, we compare the performance in terms of accuracy $\rho_\pi$ accrued by the controller policy $\pi$ continuously performing a classification task, that is 
\begin{align}\label{eq:accuracy}
    \rho_\pi \triangleq \frac{1}{T}\sum_{t_n=1}^T \mathds{1}_{\{\hat{\ry}^{(\ra_{t_n})}_{t_n}=\ry_{t_n}\}}\ \Big\lvert\ \ra_{t_n}\sim \pi.
\end{align}
Each policy is obtained by solving a cumulative discounted reward problem with $\gamma=0.9$. The accuracy is computed for $30$ episodes of length $T=5000$, using a combination of the following hyperparameters: $p_\rh^G\in\{0.5,0.7,0.9\}$, $p_\rh^B\in\{0.3, 0.5, 0.9\}$, $p^G_{\re^H}\in\{0.3, 0.7, 0.8, 1\}$ $p^B_{\re^H}\in\{0, 0.2, 0.3, 0.5\}$, and $b_{\text{max}}=\{3,5,10,20,30\}$. For the analysis, we focus on the accuracy performance as function of the \gls*{es} maximum capacity and the incoming energy rate.

\subsubsection{Average Accuracy versus $b_{\text{max}}$}
Figure~\ref{fig:controller_comparison}(a) illustrates the variation in the average accuracy achieved by the controller using a policy $\pi$ in those environments (one for each combination of  $p_\rh^G, p_\rh^B, p^G_{\re^H}, p^B_{\re^H}$) that have the same \gls*{es} capacity $b_{\text{max}}$. Generally, higher \gls*{es} capacities with respect to the highest cost of the available modes lead to improved accuracy across all policies. A strategy based on random action selection underperforms compared to all the other methods that employ informed decision-making. Moreover, across all the \gls*{es} capacities, \gls*{iaw} controllers consistently demonstrate higher accuracy than \gls*{iag} ones, with performance improvements of up to 8\% large $b_{\text{max}}$ values. Notably, incremental approaches outperform one-shot counterparts at \gls*{es} capacities comparable to the full-model energy requirements. In fact, the incremental controller can strategically delay its decision within the designated epoch, awaiting favorable \gls*{eh} conditions. This opportunistic strategy enables the selection of subsequent exits, which lead to enhanced performance. Indeed, the \gls*{inc}-\gls*{iaw}-\gls*{ee}-\gls*{dqn}, which combines incremental decision-making and instance awareness, consistently dominates all the other policies in terms of accuracy when $b_{\text{max}}\in\{3,5\}$. For higher values $b_{\text{max}}\in\{10,20,30\}$, it even approaches the accuracy of \gls*{os}-\gls*{iaw}-oracle, while matching that of \gls*{os}-\gls*{iaw}-\gls*{dqn}. As a consequence, the small performance gap between \gls*{os}-\gls*{iaw}-oracle and model-free controllers seems to represent the performance cost a system has to pay when the environment transitions are unknown (sub-optimality gap).


\subsubsection{Average Accuracy versus Incoming Energy Rate}
The environment model used in this paper is described by a significant number of parameters, i.e., $\{p_\rh^G, p_\rh^B, p^G_{\re^H}, p^B_{\re^H}, b_{\text{max}}\}$. Each environment model is associated with an incoming energy rate $\mu$, computed as in \eqref{eq:e_rate}.
Let $U_\mu$ be the set of environments with the same $\mu$. 
The analysis in \cite{bullo} shows that $\mu$ is not sufficient to summarize the whole behaviour of an environment: in general, for $u_1\neq u_2$, $u_1,u_2\in U_\mu$, the respective optimal policies $\pi^*({u_1})$ and $\pi^*({u_2})$ differ. However, we notice that the difference in performance between $\pi^*({u_1})$ and $\pi^*({u_2})$ is limited $\forall u\in U_\mu$. Therefore, we study the impact of the incoming energy rate $\mu$ on the controller performance $\rho$ by computing the average accuracy over $U_\mu$ for the different rates $\mu$, obtained from each combination of the parameters listed at the beginning of Section~\ref{sec:controller_comparison}.\\
\indent Figure~\ref{fig:controller_comparison}(b) displays the accuracy of different policies as a function of $\mu$. We highlight in the plot the test accuracies of policies choosing always a fixed mode $i$, $\rho$@Mode$i$, and their minimum operating $\mu$, corresponding to $\{(1,0.53),(2,0.69),(3,0.83)\}$.
As the incoming energy rate increases, as expected, the accuracy achieved by all the controllers improves. Random strategy underperforms compared to the informed methods.  Moreover, across all the incoming energy rates, \gls*{iaw} policies consistently outperform \gls*{iag} counterparts, with performance improvements reaching up to approximately 5\%. Incremental approaches and their one-shot counterparts have comparable performance as a function of $\mu$. To explain this, first note that under the same rate $\mu$, the average amount of energy available over time is the same for both controllers. Moreover, for every input instance, \gls*{inc} schemes optimize sub-action selection within a fixed horizon of $T$ slots. Hence, waiting for better \gls*{eh} conditions comes at the cost of reducing the remaining time for computation. Therefore, the \gls*{inc} schemes' advantage of exploiting frequent observations to adjust sub-actions in response to short-term state variations does not necessarily translate into significantly higher accuracy. This is especially the case when performance is measured by long-term average criteria, that can attenuate short-term stochastic variations.\\
\indent The accuracy of \gls*{iag} controllers exhibit a piece-wise linear structure as a function of $\mu$, with breakpoints at $\mu=1,2$. Due to the unavailability of the per-instance confidence, these controllers always select the $i$-th computing mode when $\mu=u(i)$, and tend to linearly combine the frequency of using mode $i$ and $j$, $i<j$, when $u(i)<\mu<u(j)$. Within the \gls*{iag} controllers, \gls*{mms} and \gls*{inc}-\gls*{iag}-\gls*{ee} behave similarly.\\
\indent Instance-aware methods exploit the knowledge of per-instance confidence to push the performance beyond that of \gls*{iag} controllers. The accuracy $\rho$ as a function of $\mu$ is superlinear, with the maximum improvements of approximately $5\%$ over \gls*{iag} controllers reached around $\mu=2$. At this rate, \gls*{iaw} methods achieve an accuracy of approximately $0.75$ while \gls*{iag} methods reach approximately $0.7$. In general, for the same accuracy target, \gls*{iaw} controllers need lower incoming energy rates than their \gls*{iag} counterparts. For example, the minimum energy rate required by \gls*{iaw} controllers to achieve an accuracy target equal to $\rho$@Mode2 is approximately $1.5$, while for \gls*{iag} ones it is $2$. \gls*{iaw} methods dominate until the average incoming energy reaches $3$ units per slot, which is enough to power the entire \gls*{dnn}. In this unconstrained scenario, the optimal behavior for all controllers is to consistently choose exit 3, which has an energy cost of $3$ and yields the highest accuracy.
Overall, the \gls*{os}-\gls*{iaw}-oracle controller (and similarly the \gls*{os}-\gls*{iaw}-\gls*{dqn}-oracle) leads in performance, showing high accuracy even for incoming energy rates lower than the cost of the first computing mode ($u(1)=1$). Particularly, for \(0 \leq \mu < 1\), where the average available energy per slot is only sufficient to power a random-guesser, the instance-aware oracle controllers can leverage confidence information to achieve higher accuracy per unit of incoming energy. In the same energy rate range, the \gls*{os}-\gls*{iaw}-\gls*{dqn} behaves as if confidence-awareness is not available, showing the same behaviour as \gls*{iag} controllers.

\subsection{Key Observations}
Each policy can be interpreted as establishing dynamic thresholds for each computing mode, which vary with the \gls*{es} level. For instance-aware methods, this dynamic nature is also influenced by the distribution of input confidences.
Although the likelihoods produced by a \gls*{dnn}, even when calibrated, cannot be strictly considered as true probabilities, their integration into the decision-making process leads to more informed decisions, achieving higher accuracy with reduced energy consumption. This approach favors the selection of more energy-intensive modes only when necessary, such as when an input instance presents high complexity for the current task and DNN model. Furthermore, when a system with constrained $b_{\text{max}}$ operates in a low-incoming energy regime, incremental controllers outperform their one-shot counterparts. In our experiments, this is observed when the cost of the most expensive computing mode exceeds 60\% of the system battery capacity \(b_{\text{max}}\). Instead, in high-incoming energy regimes, and for an unconstrained $b_{\text{max}}$, the performance of one-shot and incremental controllers are comparable.
In conclusion, if the distribution of input samples in 
the available dataset is statistically representative of the one in the user application, incorporating \gls*{dnn} output likelihoods into the decision-making process is recommended to enable instance-awareness. Additionally, our findings suggest that adopting multi-exit \glspl*{dnn} as a proxy for inference adaptation is advantageous, as it enables not only instance-awareness, but also incremental decision-making strategies with better performance with relatively small ES capacity.

\section{Conclusion and Future Directions}\label{sec:conclusion}
In this paper, we addressed the challenges associated with adapting neural inference workloads to the available energy envelope and \gls*{es} constraints in \glspl*{ehd}. We developed a comprehensive framework for optimal control in dynamic \glspl*{dnn} by studying \gls*{mms} and \gls*{ee} strategies. These approaches were tailored to leverage instance-aware and instance-agnostic control schemes, operating at both one-shot and incremental granularities. Our main contributions include establishing the optimal policy structure for the MmS system, demonstrating its monotonicity in \gls*{es} levels for energy and memory efficiency. We formulated a one-shot oracle controller using per-instance exit confidences with theoretical guarantees and designed an approximate \gls*{vi} algorithm to estimate optimal policies using empirical confidence distributions. Additionally, we developed a sub-optimal policy based on a lightweight \gls*{dqn} for incremental instance-aware \gls*{ee} selection, named \gls*{inc}-\gls*{iaw}-\gls*{ee}. In conclusion, we conducted comprehensive empirical evaluations comparing our control schemes on a custom multi-exit EfficientNet model tested on the TinyImageNet dataset, analyzing accuracy under various \gls*{es} capacities and energy rates. For the extensive range of parameter values examined in this work, simulations indicates that \gls*{iaw} and \gls*{inc} control schemes can significantly enhance accuracy, encouraging the adoption for practical designs. In fact, \gls*{inc} approaches proved to be more efficient, achieving higher accuracy in scenarios with limited \gls*{es} capacity, and \gls*{iaw} schemes consistently outperformed their \gls*{iag} counterparts.

Future research can expand on several interesting directions. In many applications, collected data exhibits temporal correlations, which 
can be leveraged to decide the computing mode.
While this paper assumes a sensing apparatus operating at a fixed sampling rate, a more general scenario involves input instances being collected at variable rates. In such cases, the intelligent management of time resources becomes essential. In fact, \gls*{ee} strategies can be employed to halt the processing earlier, thereby conserving both energy and time. This approach presents a significant trade-off between energy consumption and the time required for computation. However, effectively managing this trade-off necessitates a theoretical model for input arrivals, (e.g., a Poisson process).

{\appendices
\section{Proof of Theorem~\ref{th:pwl_vf}}\label{ap:proof_pwl_vf}
\begin{proof}[Proof of Theorem~\ref{th:pwl_vf}]
	By definition of Bellman-optimal value function we have
	\begin{align}
                v^*(\vs,\vz) &= \max_{a\in\mathcal{A}_\vs} \Big\{ r(\vs,\vz,a) +
                \gamma \sum_{\vs'\in\mathcal{S}} \mathbb{P}(\vs'|\vs,a)\int_\mathcal{Z} v^*(\vs',z')\mathbb{P}(\vz')\,d\vz'\Big\}\notag\\
                &=  \max_{a\in\mathcal{A}_{\vs}}\Big\{ e^\top_a\vz + \gamma \sum_{\vs'\in\mathcal{S}} \mathbb{P}(\vs'|\vs,a)\bar{v}^*(\vs') \Big\}.
        \end{align}
        By rewriting the above expression in a vector form, i.e., $\mathbb{P}_a(\vs)\triangleq[\mathbb{P}(\vs_1|\vs,a), \dots, \mathbb{P}(\vs_{|\mathcal{S}|}|\vs,a)]^\top$, and $\bar{\vv}^*=[\bar{v}^*(\vs_1), \dots, \bar{v}^*(\vs_{|\mathcal{S}|})]^\top$, we obtain
        \begin{align}\label{eq:conditions}
               v^*(\vs,\vz) &=  \max_{a\in\mathcal{A}_{\vs}}\Big\{ \vz^{(a)} + \gamma\mathbb{P}^\top_a(\vs)\bar{v}^* \Big\}=\max_{a\in\mathcal{A}_{\vs}}q^*(\vs,\vz,a)
            = \sum_{j=0}^{K-1} \Big(e^\top_j\vz + \gamma \mathbb{P}_{a_j}(\vs)\bar{\vv}^*\Big)\mathds{1}_{\{a_j\succeq a_i, \forall i\neq j\}},
	\end{align}
 where $a_j\succeq a_i$\footnote{Ties are broken arbitrarily.} means that action $a_j$ is at least as good as $a_i$ in the following sense
 \begin{align}\label{eq:preference}
     a_j\succeq a_i\quad  \Longleftrightarrow\quad  q^*(\vs,\vz,a_j)\geq q^*(\vs,\vz,a_i),\quad i\neq j.
 \end{align}
 By plugging the definition of $q^*$ into \eqref{eq:preference} and re-arranging, we obtain $\forall\, i=0,\dots,K-1$
 \begin{align}\label{eq:z_sys_inequalities}
     z^{(a_j)} - z^{(a_i)} \geq\gamma\Big(\mathbb{P}_{a_i}(\vs) - \mathbb{P}_{a_j}(\vs)\Big)\bar{v}^*,\ i\neq j.
 \end{align}
 Let $\delta^{*}_{ij}(\vs)\triangleq \gamma\big(\mathbb{P}_{a_i}(\vs) - \mathbb{P}_{a_j}(\vs)\big)\bar{v}^*$. Note that
\begin{align}\label{eq:opposite_deltas}
    \delta_{ij}(\vs)&=-\delta_{ji}(\vs), \\
    \delta_{ij}(\vs)&=\delta_{ik}(\vs)+\delta_{kj}(\vs),\quad  \forall i,j=1,\dots,K.
\end{align}
Without loss of generality we choose a reference index, say $k=0$. In this way, every $\delta_{ij}(\vs)$ can be rewritten as a function of $\delta_{0i}, i=0,\dots,K-1$, as follows:
\begin{align}
     \delta_{ij}(\vs) &= \delta_{i0}(\vs)+\delta_{0j}(\vs)\\
     \delta_{ij}(\vs) &= -\delta_{0i}(\vs)+\delta_{0j}(\vs).\label{eq:deltas_props}
\end{align}
By definition \eqref{eq:z_sys_inequalities}, $\delta_{ii}=0,\forall i$, therefore, $K-1$ values are sufficient to represent every $\delta_{ij}(s)$.

According to \eqref{eq:conditions}, for each state $\vs\in\mathcal{S}$, $\mathcal{Z}$ is partitioned into $K$ subsets 
\begin{align*}
    \mathcal{Z}_j(\vs) = \{\vz\in\mathcal{Z}: \mM_j \vz\geq\mF_j\vdelta(\vs)\},\quad j=0,\dots,K-1,
\end{align*}
where $\vdelta(\vs)=[\delta_{0i}, i=1,\dots,K-1]$, $\mF_j\in\{0,\pm1\}^{K-1\times K-1}$ maps $\vdelta(\vs)$ into the corresponding thresholds according to \eqref{eq:deltas_props}, and $\mM_j\in\{0,\pm 1\}^{K-1\times K}$ is a matrix formed by a negative identity matrix $-\mI_{K-1}$ and a column vector of ones inserted at column index $j$. Therefore,
\begin{align}
    v^*(\vs,\vz) = e^\top_j\vz + \gamma \mathbb{P}_{a_j}(\vs)\bar{\vv}^*,\vz\in\mathcal{Z}_j(\vs),
\end{align}
which is linear in $\vz\in\mathcal{Z}_j(\vs), \forall j=0,\dots,K-1$.
\end{proof}

\section{Proof of Theorem \ref{th:mms_optimal_policy}}\label{ap:proof_opt_monotone}
\begin{proof}[Proof of Theorem \ref{th:mms_optimal_policy}]
We begin by characterizing the transition probabilities, where $\mathbb{P}\big(b',h' \lvert b,h,a\big)$ indicates 
\begin{align}
    \mathbb{P}\big(\rb_{t_{n+1}}=b',\rh_{t_{n}}=h'\big\lvert\rb_{t_n}=b,\rh_{t_{n-1}}=h,\ra_{t_n}=a \big),
\end{align} 
and $p_\rh(h'|h)\triangleq\mathbb{P}(\rh_{{t_n}}=h'|\rh_{t_{n-1}}=h)$, $\forall n\in\mathbb{N}$. Then, let $p_{\rb'}(b'|b,h,a)\triangleq \mathbb{P}(\rb_{t_{n+1}}=b'|\rb_{t_n}=b,\rh_{{t_n}}=h,\ra_{t_n}=a)$, where $\rb_{t_{n+1}}=\min([\rb_{t_n}-u(\ra_{t_n})]^++\re_{t_n}^H), b_{\text{max}})$, $\forall n\in\mathbb{N}$. Hence,
\begin{align*}
    \begin{split}
        \mathbb{P}\big(b',h' \lvert b,h,a\big) &= p_{\rb'}(b'|b,h',a)\,p_\rh(h'|h)\\
        & = \sum_{e\in \mathcal{E}^H} \mathds{1}_{\{b'=\beta(b,a,e)\}}\,p_\rh(h'|h)\,p^\rh_{\re^H}(e|h'),
    \end{split}
\end{align*}
where $\beta(b,a,e)\triangleq \min([b-u(a)]^++e), b_{\text{max}})$.

For brevity, let $\mathbb{P}(e,h'|h)\triangleq p_\rh(h'|h)\,p^\rh_{\re^H}(e|h')$, $v_h^*(b)\triangleq v^*_{\text{\gls*{mms}}}(h,b)$, $\beta^e_{ij}\triangleq\beta(b_i,a_j,e)$ and $r_{ij}\triangleq r(b_i,a_j)$. In order for \eqref{eq:opt_mms_policy} to hold, it is sufficient to prove that
    \begin{align}
        \begin{split}
            q_h^*(b,a) &= r(b,a) + \gamma\sum_{h'\in\mathcal{H}}\sum_{b'=0}^{b_{\text{max}}} \mathbb{P}\big(b',h' \lvert b,h,a\big) v_{h'}^*(b')\\
            &= r(b,a) + \gamma \sum_{e,h'} \mathbb{P}(e,h'|h)\,v_{h'}^*\big(\beta(b,a,e)\big)
        \end{split}
    \end{align}
    has non-decreasing differences (superadditive) \cite{puterman}, that is for $b_2\geq b_1$ and $a_2\geq a_1$, $a_2, a_1\in\mathcal{A}_{b_1}$,
    \begin{align}\label{eq:superadditivity}
        q^*_h(b_2,a_2) - q^*_h(b_2,a_1) \geq q^*_h(b_1,a_2) - q^*_h(b_1,a_1).
    \end{align}
    Considering that the optimal value function $v^*_{h}$ is the unique fixed point of $(\mathcal{T}v)(b) = \max_{a\in\mathcal{A}_b}q_h(b,a)$, i.e., $\lim_{\ell\to\infty}(\mathcal{T}^\ell v^0)(b) = v^*_{h}(b)$ for any arbitrary initial function $v^0$, we show by means of an inductive argument over $\ell=1,2,\dots$ that \eqref{eq:superadditivity} holds.

    First, we note that for any \gls*{es} level $b\in\mathcal{B}$ and feasible actions $a_1,a_2\in\mathcal{A}_b$, the reward function has constant differences, that is
    \begin{align}\label{eq:reward_constant}
        r(b,a_1) - r(b,a_2) = r(b^+,a_1) - r(b^+,a_2),\ \forall b^+\geq b.
    \end{align}
    Hence, for $v^0(b)=0,\ \forall b$, \eqref{eq:superadditivity} is satisfied since $q^{0}_h(b,a_2) - q^{0}_h(b,a_1)=0$, $\forall b$. Assume that \eqref{eq:superadditivity} holds for $q^{\ell}_h$. Because of \eqref{eq:reward_constant}, this is equivalent to 
    \begin{align*}
        \begin{split}
            \mathbb{E}_{\re,\rh'}\Big[v_{\rh'}^\ell\big(\beta_{22}^\re\big)-
            v_{\rh'}^\ell\big(\beta_{21}^\re\big)\Big]\geq  \mathbb{E}_{\re,\rh'}\Big[v_{\rh'}^\ell\big(\beta_{12}^\re\big)-
        v_{\rh'}^\ell\big(\beta_{11}^\re\big)\Big].
        \end{split}
    \end{align*}
    Then
    \begin{align}
        \begin{split}
            \delta^{\ell+1}_q(b_2) &= q^{\ell+1}_h(b_2,a_2) - q^{\ell+1}_h(b_2,a_1)\\
            &= (r_{22}-r_{21}) + \gamma\mathbb{E}_{\re,\rh'}\Big[v_{\rh'}^\ell\big(\beta_{22}^\re\big)-v_{\rh'}^{\ell}\big(\beta_{21}^{\re}\big)\Big]\\
            &\geq (r_{12}-r_{11}) +\gamma\mathbb{E}_{\re,\rh'}\Big[v_{\rh'}^\ell\big(\beta_{12}^{\re}\big)-v_{\rh'}^\ell\big(\beta_{11}^{\re}\big)\Big]\\
            &= q^{\ell+1}_h(b_1,a_2) - q^{\ell+1}_h(b_1,a_1) =  \delta^{\ell+1}_q(b_1)
        \end{split}
    \end{align} 
    Therefore, since superadditivity of $q_h$ is preserved trough a Bellman update, $q^*_h$ must be superadditive.
\end{proof}

\section{Exit Probability for \gls*{inc}-\gls*{iag}-\gls*{ee} Controllers}}\label{ap:exit_prob_inc}
Let us define the set of discrete states indicating that the system is at $\mathfrak{t}$-th processing stage as 
$$
    \mathcal{S}_{\mathfrak{t}} \triangleq \{s=(\rb, \rh, \xi, \tau)\in\mathcal{S}:\tau=\mathfrak{t}\}.
$$
We refer to $\mathcal{S}_{0}$ as the set of initial states. We allow $\tau=K-1$ and we define a fictitious set of final states, $\mathcal{S}_{K-1}$, where the reward corresponding to the previously selected exit is accrued, and no actions are selected. Note that if $\tau=0$ then $\xi=0$, since the processing of an input instance always starts from the $0$-th exit, therefore the number of initial states is $N_0 = |\mathcal{B}||\mathcal{H}|$ and the number of final states is $N_{K-1}=|\mathcal{B}||\mathcal{H}|K$.
For every $\vs_0\in\mathcal{S}_{0}$, we want to compute the probability of selecting the $k$-th exit in the remaining $K-1$ slots, prior to the next sample arrival.
Let $\pi_{\text{inc}}^*$ the $\varepsilon$-optimal policy found by any dynamic programming algorithm, and $\mathbb{P}_{\pi_{\text{inc}}^*}$ the transition probability matrix induced by such policy.
Without any loss of generality, we assume that the first $N_0$ states in $\mathbb{P}_{\pi_{\text{inc}}^*}$ are the
initial states, indexed by $i=0,\dots,N_0-1$. Similarly, the states indexed by $i=N_0,\dots,N_0+N_{K-1}-1$ are final states. Hence, the $\mathfrak{t}$-step transition probability matrix $\mathbb{P}^{(\mathfrak{t})}_{\pi_{\text{inc}}^*}$, where for the $n$-th input instance, the time instant $\mathfrak{t}$ corresponds to $t=t_n+\mathfrak{t}$, can be partitioned as follows
$$
        \mathbb{P}^{(\mathfrak{t})}_{\pi_{\text{inc}}^*} = \begin{bmatrix}
                    \mathbb{P}^{(\mathfrak{t})}_{0,0} & \mathbb{P}^{(\mathfrak{t})}_{0,{K-1}} & \mathbb{P}^{(\mathfrak{t})}_{0,\tilde{s}}\\
                    \mathbb{P}^{(\mathfrak{t})}_{{K-1},0} & \mathbb{P}^{(\mathfrak{t})}_{{K-1},{K-1}} & \mathbb{P}^{(\mathfrak{t})}_{{K-1},\tilde{s}}\\
                    \mathbb{P}^{(\mathfrak{t})}_{\tilde{s},0} & \mathbb{P}^{(\mathfrak{t})}_{\tilde{s},{K-1}} & \mathbb{P}^{(\mathfrak{t})}_{\tilde{s},\tilde{s}}
                \end{bmatrix},
$$
where $\tilde{\vs}\notin \mathcal{S}_{0}$, $\tilde{s}\notin\mathcal{S}_{K-1}$, and, with a slight abuse of notation, we write $\mathbb{P}^{(\mathfrak{t})}_{i,j}\triangleq \mathbb{P}^{(\mathfrak{t})}(\vs'\in\mathcal{S}_{j}|\vs\in\mathcal{S}_{i})$. In particular, we are interested in $\mathbb{P}^{(K-1)}_{0,{K-1}}$. Note that for every $0<\mathfrak{t}\leq K-1$, $\mathbb{P}^{(\mathfrak{t})}_{0,0}=0$, since for every $s\in\mathcal{S}_0$, $\tau_{t_n}=0$ and, after $\mathfrak{t}$ steps we have $\tau_{t_n+\mathfrak{t}}=\tau_{t_n}+\mathfrak{t}\neq 0$; if $\mathfrak{t}=K-1$, $\mathbb{P}^{(K-1)}_{0,\tilde{s}}=0$, since $\tau_{t_n+K-1}= \tau_{t_n}+K-1\neq 0$.
Therefore, $\mathbb{P}^{(K-1)}_{0,{K-1}}$ is stochastic, since $\mathbb{P}^{(K-1)}_{\pi_{\text{inc}}^*}$ does.
Finally, let the set of terminal states that end with exit $k$ be 
$$
    \mathcal{S}_{k,K-1}\triangleq\{\vs=(\rb, \rh, \xi, \tau)\in\mathcal{S}:\xi=k,\tau=K-1\}.
$$
The probability of choosing exit $k$ whenever $s\in\mathcal{S}_{0}$, is
$$
    \eta_k(\vs) \triangleq \sum_{\vs'\in\mathcal{S}_{k,K-1}}\mathbb{P}^{(K-1)}_{\pi_{\text{inc}}^*}(\vs,\vs')=\mathbf{1}^\top\mathbb{P}^{(K-1)}_{0,{K-1}}(\vs),
$$
where $\mathbb{P}^{(K-1)}_{0,{K-1}}(\vs)$ is the row of $\mathbb{P}^{(K-1)}_{\pi_{\text{inc}}^*}$ corresponding to $\vs$.


\vspace{0.5cm}
\bibliographystyle{IEEEtran}
\bibliography{ref}

\begin{thebibliography}{10}
\providecommand{\url}[1]{#1}
\csname url@samestyle\endcsname
\providecommand{\newblock}{\relax}
\providecommand{\bibinfo}[2]{#2}
\providecommand{\BIBentrySTDinterwordspacing}{\spaceskip=0pt\relax}
\providecommand{\BIBentryALTinterwordstretchfactor}{4}
\providecommand{\BIBentryALTinterwordspacing}{\spaceskip=\fontdimen2\font plus
\BIBentryALTinterwordstretchfactor\fontdimen3\font minus \fontdimen4\font\relax}
\providecommand{\BIBforeignlanguage}[2]{{%
\expandafter\ifx\csname l@#1\endcsname\relax
\typeout{** WARNING: IEEEtran.bst: No hyphenation pattern has been}%
\typeout{** loaded for the language `#1'. Using the pattern for}%
\typeout{** the default language instead.}%
\else
\language=\csname l@#1\endcsname
\fi
#2}}
\providecommand{\BIBdecl}{\relax}
\BIBdecl

\bibitem{10177897}
L.~Ren, Z.~Jia, Y.~Laili, and D.~Huang, ``Deep learning for time-series prediction in iiot: Progress, challenges, and prospects,'' \emph{IEEE Transactions on Neural Networks and Learning Systems}, pp. 1--20, 2023.

\bibitem{9234528}
H.-C. Li, W.-S. Hu, W.~Li, J.~Li, Q.~Du, and A.~Plaza, ``A3 clnn: Spatial, spectral and multiscale attention convlstm neural network for multisource remote sensing data classification,'' \emph{IEEE Transactions on Neural Networks and Learning Systems}, vol.~33, no.~2, pp. 747--761, 2022.

\bibitem{9129779}
K.~Muhammad, S.~Khan, J.~D. Ser, and V.~H. C.~d. Albuquerque, ``Deep learning for multigrade brain tumor classification in smart healthcare systems: A prospective survey,'' \emph{IEEE Transactions on Neural Networks and Learning Systems}, vol.~32, no.~2, pp. 507--522, 2021.

\bibitem{shukla2022review}
S.~Shukla and Neduncheliyan, ``A review on machine learning methods in smart healthcare systems,'' in \emph{International Conference on Information and Communication Technology for Competitive Strategies}.\hskip 1em plus 0.5em minus 0.4em\relax Springer, 2022, pp. 325--335.

\bibitem{10172911}
Y.~Zhu, H.~Luo, R.~Chen, and F.~Zhao, ``Diamondnet: A neural-network-based heterogeneous sensor attentive fusion for human activity recognition,'' \emph{IEEE Transactions on Neural Networks and Learning Systems}, pp. 1--11, 2023.

\bibitem{8767027}
K.~Chen, L.~Yao, D.~Zhang, X.~Wang, X.~Chang, and F.~Nie, ``A semisupervised recurrent convolutional attention model for human activity recognition,'' \emph{IEEE Transactions on Neural Networks and Learning Systems}, vol.~31, no.~5, pp. 1747--1756, 2020.

\bibitem{9509348}
Z.~Zhou and H.~Xu, ``Decentralized adaptive optimal tracking control for massive autonomous vehicle systems with heterogeneous dynamics: A stackelberg game,'' \emph{IEEE Transactions on Neural Networks and Learning Systems}, vol.~32, no.~12, pp. 5654--5663, 2021.

\bibitem{9670465}
Z.~Wang, J.~Zhan, C.~Duan, X.~Guan, P.~Lu, and K.~Yang, ``A review of vehicle detection techniques for intelligent vehicles,'' \emph{IEEE Transactions on Neural Networks and Learning Systems}, vol.~34, no.~8, pp. 3811--3831, 2023.

\bibitem{dosovitskiy2021an}
\BIBentryALTinterwordspacing
A.~Dosovitskiy, L.~Beyer, A.~Kolesnikov, D.~Weissenborn, X.~Zhai, T.~Unterthiner, M.~Dehghani, M.~Minderer, G.~Heigold, S.~Gelly, J.~Uszkoreit, and N.~Houlsby, ``An image is worth 16x16 words: Transformers for image recognition at scale,'' in \emph{International Conference on Learning Representations}, 2021. [Online]. Available: \url{https://openreview.net/forum?id=YicbFdNTTy}
\BIBentrySTDinterwordspacing

\bibitem{papa2024survey}
L.~Papa, P.~Russo, I.~Amerini, and L.~Zhou, ``A survey on efficient vision transformers: algorithms, techniques, and performance benchmarking,'' \emph{IEEE Transactions on Pattern Analysis and Machine Intelligence}, 2024.

\bibitem{mao2017survey}
Y.~Mao, C.~You, J.~Zhang, K.~Huang, and K.~B. Letaief, ``A survey on mobile edge computing: The communication perspective,'' \emph{IEEE communications surveys \& tutorials}, vol.~19, no.~4, pp. 2322--2358, 2017.

\bibitem{kokkonen2022autonomy}
H.~Kokkonen, L.~Lov{\'e}n, N.~H. Motlagh, A.~Kumar, J.~Partala, T.~Nguyen, V.~C. Pujol, P.~Kostakos, T.~Lepp{\"a}nen, A.~Gonz{\'a}lez-Gil \emph{et~al.}, ``Autonomy and intelligence in the computing continuum: Challenges, enablers, and future directions for orchestration,'' \emph{arXiv preprint arXiv:2205.01423}, 2022.

\bibitem{sonic}
\BIBentryALTinterwordspacing
G.~Gobieski, B.~Lucia, and N.~Beckmann, ``Intelligence beyond the edge: Inference on intermittent embedded systems,'' in \emph{Proceedings of the Twenty-Fourth International Conference on Architectural Support for Programming Languages and Operating Systems}, ser. ASPLOS '19.\hskip 1em plus 0.5em minus 0.4em\relax New York, NY, USA: Association for Computing Machinery, 2019, p. 199–213. [Online]. Available: \url{https://doi.org/10.1145/3297858.3304011}
\BIBentrySTDinterwordspacing

\bibitem{ma2019sensing}
D.~Ma, G.~Lan, M.~Hassan, W.~Hu, and S.~K. Das, ``Sensing, computing, and communications for energy harvesting iots: A survey,'' \emph{IEEE Communications Surveys \& Tutorials}, vol.~22, no.~2, pp. 1222--1250, 2019.

\bibitem{aldin2023comprehensive}
H.~N.~S. Aldin, M.~R. Ghods, F.~Nayebipour, and M.~N. Torshiz, ``A comprehensive review of energy harvesting and routing strategies for iot sensors sustainability and communication technology,'' \emph{Sensors International}, p. 100258, 2023.

\bibitem{lucia2017intermittent}
B.~Lucia, V.~Balaji, A.~Colin, K.~Maeng, and E.~Ruppel, ``Intermittent computing: Challenges and opportunities,'' \emph{2nd Summit on Advances in Programming Languages (SNAPL 2017)}, 2017.

\bibitem{dleh_survey}
M.~Lv and E.~Xu, ``Deep learning on energy harvesting iot devices: Survey and future challenges,'' \emph{IEEE Access}, vol.~10, pp. 124\,999--125\,014, 2022.

\bibitem{zhao2022towards}
Y.~Zhao, S.~S. Afzal, W.~Akbar, O.~Rodriguez, F.~Mo, D.~Boyle, F.~Adib, and H.~Haddadi, ``Towards battery-free machine learning and inference in underwater environments,'' in \emph{Proceedings of the 23rd Annual International Workshop on Mobile Computing Systems and Applications}, 2022, pp. 29--34.

\bibitem{eperceptive}
\BIBentryALTinterwordspacing
A.~Montanari, M.~Sharma, D.~Jenkus, M.~Alloulah, L.~Qendro, and F.~Kawsar, ``eperceptive: energy reactive embedded intelligence for batteryless sensors,'' in \emph{Proceedings of the 18th Conference on Embedded Networked Sensor Systems}, ser. SenSys '20.\hskip 1em plus 0.5em minus 0.4em\relax New York, NY, USA: Association for Computing Machinery, 2020, p. 382–394. [Online]. Available: \url{https://doi.org/10.1145/3384419.3430782}
\BIBentrySTDinterwordspacing

\bibitem{9401799}
Y.~Li, Y.~Wu, X.~Zhang, E.~Hamed, J.~Hu, and I.~Lee, ``Developing a miniature energy-harvesting-powered edge device with multi-exit neural network,'' in \emph{2021 IEEE International Symposium on Circuits and Systems (ISCAS)}, 2021, pp. 1--5.

\bibitem{9218526}
Y.~Wu, Z.~Wang, Z.~Jia, Y.~Shi, and J.~Hu, ``Intermittent inference with nonuniformly compressed multi-exit neural network for energy harvesting powered devices,'' in \emph{2020 57th ACM/IEEE Design Automation Conference (DAC)}, 2020, pp. 1--6.

\bibitem{dynamicnn_survey}
Y.~Han, G.~Huang, S.~Song, L.~Yang, H.~Wang, and Y.~Wang, ``Dynamic neural networks: A survey,'' \emph{IEEE Transactions on Pattern Analysis and Machine Intelligence}, vol.~44, no.~11, pp. 7436--7456, nov 2022.

\bibitem{lee2019mobisr}
R.~Lee, S.~I. Venieris, L.~Dudziak, S.~Bhattacharya, and N.~D. Lane, ``Mobisr: Efficient on-device super-resolution through heterogeneous mobile processors,'' in \emph{The 25th annual international conference on mobile computing and networking}, 2019, pp. 1--16.

\bibitem{mcdnn}
\BIBentryALTinterwordspacing
S.~Han, H.~Shen, M.~Philipose, S.~Agarwal, A.~Wolman, and A.~Krishnamurthy, ``Mcdnn: An approximation-based execution framework for deep stream processing under resource constraints,'' in \emph{Proceedings of the 14th Annual International Conference on Mobile Systems, Applications, and Services}, ser. MobiSys '16.\hskip 1em plus 0.5em minus 0.4em\relax New York, NY, USA: Association for Computing Machinery, 2016, p. 123–136. [Online]. Available: \url{https://doi.org/10.1145/2906388.2906396}
\BIBentrySTDinterwordspacing

\bibitem{6797059}
R.~A. Jacobs, M.~I. Jordan, S.~J. Nowlan, and G.~E. Hinton, ``Adaptive mixtures of local experts,'' \emph{Neural Computation}, vol.~3, no.~1, pp. 79--87, 1991.

\bibitem{teerapittayanon2016branchynet}
S.~Teerapittayanon, B.~McDanel, and H.-T. Kung, ``Branchynet: Fast inference via early exiting from deep neural networks,'' in \emph{2016 23rd international conference on pattern recognition (ICPR)}.\hskip 1em plus 0.5em minus 0.4em\relax IEEE, 2016, pp. 2464--2469.

\bibitem{matsubara_scee_survey}
\BIBentryALTinterwordspacing
Y.~Matsubara, M.~Levorato, and F.~Restuccia, ``Split computing and early exiting for deep learning applications: Survey and research challenges,'' \emph{ACM Comput. Surv.}, vol.~55, no.~5, dec 2022. [Online]. Available: \url{https://doi.org/10.1145/3527155}
\BIBentrySTDinterwordspacing

\bibitem{Scardapane2020WhySW}
\BIBentryALTinterwordspacing
S.~Scardapane, M.~Scarpiniti, E.~Baccarelli, and A.~Uncini, ``Why should we add early exits to neural networks?'' \emph{Cognitive Computation}, vol.~12, pp. 954 -- 966, 2020. [Online]. Available: \url{https://api.semanticscholar.org/CorpusID:216553595}
\BIBentrySTDinterwordspacing

\bibitem{bullo}
M.~Bullo, S.~Jardak, P.~Carnelli, and D.~Gündüz, ``Sustainable edge intelligence through energy-aware early exiting,'' in \emph{2023 IEEE 33rd International Workshop on Machine Learning for Signal Processing (MLSP)}, 2023, pp. 1--6.

\bibitem{9772720}
Y.~Li, Y.~Wu, X.~Zhang, J.~Hu, and I.~Lee, ``Energy-aware adaptive multi-exit neural network inference implementation for a millimeter-scale sensing system,'' \emph{IEEE Transactions on Very Large Scale Integration (VLSI) Systems}, vol.~30, no.~7, pp. 849--859, 2022.

\bibitem{dqn}
V.~Mnih, K.~Kavukcuoglu, D.~Silver, A.~A. Rusu, J.~Veness, M.~G. Bellemare, A.~Graves, M.~Riedmiller, A.~K. Fidjeland, G.~Ostrovski \emph{et~al.}, ``Human-level control through deep reinforcement learning,'' \emph{nature}, vol. 518, no. 7540, pp. 529--533, 2015.

\bibitem{Efficientnet}
M.~Tan and Q.~Le, ``Efficientnet: Rethinking model scaling for convolutional neural networks,'' in \emph{International conference on machine learning}.\hskip 1em plus 0.5em minus 0.4em\relax PMLR, 2019, pp. 6105--6114.

\bibitem{le2015tiny}
Y.~Le and X.~Yang, ``Tiny imagenet visual recognition challenge,'' \emph{CS 231N}, vol.~7, no.~7, p.~3, 2015.

\bibitem{opt_strategies_remote}
A.~Nayyar, T.~Başar, D.~Teneketzis, and V.~V. Veeravalli, ``Optimal strategies for communication and remote estimation with an energy harvesting sensor,'' \emph{IEEE Transactions on Automatic Control}, vol.~58, no.~9, pp. 2246--2260, 2013.

\bibitem{michelusi}
N.~Michelusi, K.~Stamatiou, and M.~Zorzi, ``On optimal transmission policies for energy harvesting devices,'' in \emph{2012 Information Theory and Applications Workshop}, 2012, pp. 249--254.

\bibitem{4480065}
N.~Jaggi, K.~Kar, and A.~Krishnamurthy, ``Rechargeable sensor activation under temporally correlated events,'' in \emph{2007 5th International Symposium on Modeling and Optimization in Mobile, Ad Hoc and Wireless Networks and Workshops}, 2007, pp. 1--10.

\bibitem{guo2017calibration}
C.~Guo, G.~Pleiss, Y.~Sun, and K.~Q. Weinberger, ``On calibration of modern neural networks,'' in \emph{International conference on machine learning}.\hskip 1em plus 0.5em minus 0.4em\relax PMLR, 2017, pp. 1321--1330.

\bibitem{minderer2021revisiting}
M.~Minderer, J.~Djolonga, R.~Romijnders, F.~Hubis, X.~Zhai, N.~Houlsby, D.~Tran, and M.~Lucic, ``Revisiting the calibration of modern neural networks,'' \emph{Advances in Neural Information Processing Systems}, vol.~34, pp. 15\,682--15\,694, 2021.

\bibitem{bertsekas1996neuro}
D.~Bertsekas and J.~N. Tsitsiklis, \emph{Neuro-dynamic programming}.\hskip 1em plus 0.5em minus 0.4em\relax Athena Scientific, 1996.

\bibitem{kingma2014adam}
D.~P. Kingma and J.~Ba, ``Adam: A method for stochastic optimization,'' \emph{arXiv preprint arXiv:1412.6980}, 2014.

\bibitem{stable-baselines3}
\BIBentryALTinterwordspacing
A.~Raffin, A.~Hill, A.~Gleave, A.~Kanervisto, M.~Ernestus, and N.~Dormann, ``Stable-baselines3: Reliable reinforcement learning implementations,'' \emph{Journal of Machine Learning Research}, vol.~22, no. 268, pp. 1--8, 2021. [Online]. Available: \url{http://jmlr.org/papers/v22/20-1364.html}
\BIBentrySTDinterwordspacing

\bibitem{puterman}
M.~L. Puterman, \emph{Markov Decision Processes: Discrete Stochastic Dynamic Programming}, 1st~ed.\hskip 1em plus 0.5em minus 0.4em\relax USA: John Wiley \& Sons, Inc., 1994.

\end{thebibliography}

\end{document}